\documentclass[11pt]{article}

\usepackage[margin=1in]{geometry}
\usepackage[T1]{fontenc}
\usepackage[utf8]{inputenc}
\usepackage{amsmath,amssymb}
\usepackage{amsthm}
\usepackage{array}
\usepackage{booktabs}
\usepackage{caption}
\usepackage{subcaption}
\usepackage{float}
\usepackage{graphicx}
\usepackage{hyperref}
\usepackage{microtype}
\usepackage{placeins}
\usepackage{tabularx}
\usepackage{tikz}
\usepackage{standalone}
\usepackage{xcolor}
\usetikzlibrary{arrows.meta,calc,positioning,fit,backgrounds,decorations.pathmorphing,shapes.geometric}

\graphicspath{{./}}
\hypersetup{
  colorlinks=true,
  linkcolor=blue!60!black,
  citecolor=blue!60!black,
  urlcolor=blue!60!black
}

\newcommand{\pjrope}{PJ-RoPE}

\definecolor{pjblue}{RGB}{23,84,190}
\definecolor{pjpurple}{RGB}{92,43,155}
\definecolor{pjgreen}{RGB}{21,110,35}
\definecolor{pjorange}{RGB}{220,86,0}
\definecolor{pjteal}{RGB}{0,125,132}
\definecolor{pjred}{RGB}{210,28,28}
\definecolor{pjgray}{RGB}{92,92,92}
\newtheorem{definition}{Definition}
\newtheorem{proposition}{Proposition}
\newtheorem{lemma}{Lemma}[section]

\title{PJ-RoPE: A Fourier--Jet--Affine Position Space for Relative Attention}
\author{Yaobo Zhang\\
School of Physics, Ningxia University\\
Yinchuan, 750021, China\\
\texttt{yaobozhang@zju.edu.cn}}
\date{}

\begin{document}
\maketitle

\begin{abstract}
We organize relative-position mechanisms in attention as a learnable
Fourier--Jet--Affine position space.  The starting point is lag-shift
dynamics: a relative-position kernel is a response function of the lag
\(d=i-j\), and the one-step shift \((Ef)(d)=f(d+1)\) gives a compact
classification of finite structured responses through constant-coefficient
difference modules.  In this
view, RoPE supplies simple Fourier roots, Jordan-RoPE thickens these roots into
finite Fourier jets, and ALiBi supplies the repeated unit-root affine
direction.  NTK-aware RoPE scaling fits the same structure as a spectral flow
of simple Fourier roots: moving the frequency grid generates first
Fourier-jet tangent directions, while higher Taylor directions generate higher
jets.  \pjrope{} makes these jet directions explicit and learnable, and uses
the resulting space to measure task-level sector selection.

The framework separates scalar PJ-bias kernels from exact PJ-rotary feature
transforms, introduces sector-gate, effective-mass, functional-energy, and
leave-one-order-out diagnostics, and stabilizes high-order coordinates with
LC/rapidity compactification.  Controlled probes recover designed sectors;
synthetic teachers show trainable use; small byte-level language runs favor
NTK-aware RoPE plus affine recency; symbolic music-token streams keep LC/affine
variants strong with measurable high-order corrections; and LC diagnostics
quantify the stability--resolution tradeoff.
\end{abstract}

\section{Introduction}

Relative position representations have become a central part of long-context
attention in Transformers~\cite{Vaswani2017}.  RoPE encodes relative lag
through rotations generated by Fourier characters~\cite{Su2021}.  ALiBi adds a
monotone affine recency term directly to attention logits~\cite{Press2022}.
Relative attention and recurrence-based long-context mechanisms provide earlier
evidence that lag structure matters inside attention~\cite{Shaw2018,Dai2019}.
RoPE scaling methods alter how rotary phase is allocated outside the training
window~\cite{Chen2023,Peng2024,Ding2024}.
Jordan-RoPE takes a different step: it replaces a semisimple rotary frequency
with a finite Jordan block, producing derivative-like finite-jet features around
the same frequency~\cite{JordanRoPE2026}.

These methods are often described as competing drop-in designs.  That view
hides a useful common structure.  RoPE is a reduced Fourier character point.
Jordan-RoPE is a non-reduced thickening of that point.  High-order Jordan-RoPE
increases the jet multiplicity.  ALiBi contributes an affine,
translation-like recency direction.  This suggests a position-space
interpretation: useful attention may require both a frequency-jet sector and an
affine sector, with the task selecting how much of each to use.

An operational way to express this common structure is through the lag-shift
operator.  A relative-position kernel is a response function \(K(d)\) on the
lag axis.  Moving a key one token farther from the query acts by
\((EK)(d)=K(d+1)\).  If a positional mechanism has finite structured lag
dynamics, then the shifted responses \(K,EK,E^2K,\ldots\) span a
finite-dimensional space, equivalently \(P(E)K=0\) for some polynomial \(P\).
The roots of this difference equation classify the primitive responses: simple
unit-modulus roots give RoPE-like Fourier characters, roots inside the unit
disk give damped long-distance correlations, repeated nonzero roots give
Fourier jets, and the repeated unit root gives affine recency.  NTK-aware RoPE
scaling then has a natural interpretation as a spectral flow of simple Fourier
roots; its tangent directions are first Fourier jets, and higher Taylor
directions generate higher jets.

We use the term Poincar\'e-type only for the affine-completion pattern suggested
by this decomposition.  In physics, the Poincar\'e group is the affine extension
of the homogeneous Lorentz group obtained by adjoining
translations~\cite{Wigner1939,Hall2015},
\[
  \mathrm{ISO}(1,n-1)=\mathbb{R}^{1,n-1}\rtimes SO^+(1,n-1).
\]
Here the phrase has a narrower meaning: \pjrope{} forms an affine completion of
a homogeneous Fourier--jet positional representation.  RoPE and Jordan-RoPE
live in the homogeneous phase/jet sector, while ALiBi-like recency supplies the
additive affine direction.  This is a structural analogy, not a literal
spacetime symmetry of token sequences.

\pjrope{} turns this interpretation into a concrete relative-position
framework.  It defines a Fourier--Jet--Affine space of primitives and makes the
sector weights learnable.  The paper studies relative attention as the object of
interest.  Its scalar implementation, PJ-bias, is
an additive attention-logit kernel containing Fourier characters, damped finite
jets, affine recency terms, and LC compactified variants.  Its exact
feature-transform implementation, PJ-rotary, applies a relative action to query
and key features and is used to verify RoPE/Jordan-RoPE closure.  Keeping these
two regimes separate is essential: scalar PJ-bias can recover scalar kernels,
but it is not the same object as a rotary feature transform.

The framework also exposes a stability problem.  High-order jets carry powers
of the relative distance.  At long context, these powers can cause transformed
features, logits, and cache scales to grow dramatically.  Light-cone PJ replaces
raw distance by a compactified phase \(L\,\operatorname{asinh}(d/L)\) and
saturating amplitude \(d / \sqrt{d^2 + L^2}\).  This controls growth, but it
also compresses far-range phase resolution.  We treat that compression as an
explicit stability--resolution tradeoff.

PJ-RoPE is used here as a position-space diagnostic.  The experiments ask which
region of the Fourier--Jet--Affine space is selected by a task: controlled
kernels recover their designed sectors, synthetic sequence tasks test whether
these sectors can be used inside trainable attention, language runs concentrate
on affine/recency behavior, music-token streams show LC/affine behavior with
measurable high-order corrections, and LC diagnostics expose the
stability--resolution tradeoff.

\paragraph{Contributions.}
We make four contributions.  \textbf{Position-space formulation.}  We
formulate \pjrope{} as a Fourier--Jet--Affine relative-position space and
separate scalar PJ-bias kernels from exact PJ-rotary feature transforms.
\textbf{Sector containment and spectral-flow interpretation.}  We identify
RoPE, Jordan/high-order finite jets, ALiBi-like recency, and LC compactified
coordinates as sectors of the same lag-shift position space, and relate
NTK-aware RoPE scaling to Fourier-jet tangent directions.
\textbf{Adaptive diagnostics.}  We introduce sector gates, effective
mass, functional energy, and leave-one-order-out ablations to measure task-level
sector selection.  \textbf{Evidence chain.}  We evaluate sector recovery,
trainable use, natural-task allocation, and LC stability--resolution tradeoffs
across controlled kernels, synthetic sequence tasks, byte-level language, and
symbolic music-token streams.

\section{Background}

Let \(d=i-j\ge 0\) denote the relative lag from a query at position \(i\) to a
key at position \(j\).  RoPE represents this lag by a rotation.  In complex
notation, the primitive relative character is
\[
  \chi_\omega(d) = \exp(i\omega d).
\]
The real implementation exposes the corresponding cosine and sine components
\cite{Su2021}.  Long-context RoPE variants such as position interpolation,
YaRN, and LongRoPE change the phase schedule used beyond the training length
\cite{Chen2023,Peng2024,Ding2024}.

ALiBi uses a different primitive.  It adds a head-specific linear recency term
to the attention score~\cite{Press2022}.  In a relative-position basis, this is
an affine direction, not a Fourier phase.  This distinction matters because a
model may need local recency and oscillatory phase information for different
reasons.

Jordan-RoPE replaces a pure rotary block by a non-semisimple Jordan
block~\cite{JordanRoPE2026}.  A first-order Jordan correction contributes terms
of the form \(d\exp(i\omega d)\).  Higher-order blocks contribute higher powers
of \(d\) multiplied by the same Fourier character.  In differential language,
these are finite jets of the Fourier character curve.  This makes Jordan-RoPE a
local thickening of RoPE in frequency space.

\section{Related Work}

\paragraph{Transformers and relative position.}
The original Transformer adds absolute sinusoidal position signals to an
otherwise permutation-invariant attention layer~\cite{Vaswani2017}.  Relative
position representations instead inject pairwise lag structure directly into
self-attention~\cite{Shaw2018}, while Transformer-XL combines relative
position terms with segment-level recurrence for longer-context language
modeling~\cite{Dai2019}.  These works motivate our relative-attention
formulation of \pjrope{}.

\paragraph{Rotary and affine relative position.}
RoPE represents relative lag through a homogeneous rotary phase action, making
translation of positions appear as phase differences in query/key inner
products~\cite{Su2021}.  ALiBi takes the complementary scalar route: it adds a
head-specific linear recency bias directly to attention logits~\cite{Press2022}.
The two mechanisms are often treated as competing recipes, but for this paper
they supply different sectors of a relative-position space: a homogeneous phase
sector and an affine recency sector.

\paragraph{Long-context phase scaling and kernelized biases.}
Position interpolation, YaRN, LongRoPE, and XPos modify how rotary or relative
phase is allocated outside the training window~\cite{Chen2023,Peng2024,Ding2024,Sun2023}.
NTK-aware and adjusted-base-frequency RoPE variants can be viewed in the same
family of phase-schedule modifications: they change the effective RoPE
frequency grid used at long context
\cite{Bloc97NTK2023,Xiong2023LongContextScaling,Liu2024ScalingRoPE}.  In this
paper we use them not as a separate primitive sector, but as spectral-flow
baselines whose tangent directions connect naturally to Fourier jets.
Kernelized and functional relative-position methods such as KERPLE, FIRE, and
MEP
generalize scalar relative biases through kernel or learned-function
families~\cite{Chi2022,Li2024,Gao2024MEP}.  Hyperbolic bias methods such as
HyPE also use nonlinear distance coordinates for relative-position bias
\cite{Angelotti2023HyPE}.  These methods are strong baselines for
long-context language modeling and remain important reference lines.  \pjrope{}
asks which primitive a task selects when Fourier, finite-jet, affine, and
LC-stabilized coordinates are available in a shared space.

\paragraph{Relative-position primitive families.}
Existing relative-position mechanisms can also be grouped by the primitive
function of the lag that they introduce into attention.  Table-based and
bucketed methods learn local or discretized offsets.  Bias-centric methods add
scalar recency functions to the logits, including linear, bucketed,
logarithmic, or kernelized forms.  Rotary methods use Fourier phase characters
and modify their frequency schedules for long-context extrapolation.
Jordan-type methods add polynomially modulated Fourier terms.  \pjrope{}
follows this primitive-family view: the Fourier sector contains order-zero
phase characters, the finite-jet sector contains repeated-root Fourier jets,
the affine sector contains ALiBi-like recency, and the LC branch compactifies
high-order coordinates for long-context stability.

\paragraph{NoPE and length-generalization analysis.}
No-position-encoding studies show that causal Transformers can still acquire
positional information from the causal mask and training dynamics, and broader
comparisons find that the best length-generalization behavior depends strongly
on task and position mechanism~\cite{Haviv2022,Kazemnejad2023}.  This supports
the sector-selection framing: the empirical question is which region of the
position space is occupied under a given training regime and domain.

\paragraph{Algebraic and group-theoretic positional encodings.}
Several recent works study positional encoding through algebraic or group-action
lenses.  Algebraic positional encodings interpret positions as structured
operators~\cite{Kogkalidis2024}; LieRE generalizes RoPE through learned Lie
rotations~\cite{Ostmeier2024}; and GRAPE provides the closest existing
group-action unification framework for RoPE-like multiplicative rotations and
ALiBi-like additive biases~\cite{Zhang2026GRAPE}.  Within that landscape,
\pjrope{} emphasizes the non-semisimple Fourier--jet sector, where a rotary
frequency is replaced by a defective complex block, together with adaptive
sector diagnostics and LC/rapidity stabilization.

\paragraph{From Jordan-RoPE to PJ-RoPE.}
Jordan-RoPE extends RoPE by replacing a semisimple rotary frequency with a
finite Jordan block, yielding non-semisimple finite-jet corrections around a
Fourier point~\cite{JordanRoPE2026}.  \pjrope{} keeps that sector but broadens
the object of study: it adds affine recency as an explicit completion, exposes
learnable gates over sectors and orders, and introduces LC/rapidity coordinates
to stabilize high-order behavior at long range.  The upgrade is therefore from
a single non-semisimple rotary sector to a learnable Fourier--Jet--Affine
relative-position space.

\paragraph{Music sequence modeling.}
Music Transformer showed that self-attention and relative timing are natural
tools for symbolic music with long-range repetition~\cite{Huang2019}.  MAESTRO
provides aligned piano performance data with MIDI and audio~\cite{Hawthorne2019},
while MusicNet provides classical music recordings with note annotations
designed for transcription research~\cite{Thickstun2017}.  Our use of MAESTRO
and MusicNet is restricted to symbolic MIDI-derived token streams; it is not an
audio transcription benchmark.

\section{PJ-RoPE Position Space}

\pjrope{} studies a finite family of relative-position primitives indexed by
lag.  At the scalar-kernel level, these primitives are functions \(K(d)\) that
can be added to attention logits.  At the feature-transform level, they arise
from a one-parameter relative action \(G(d)\) applied to query and key features.
The paper uses both views, but keeps their claims separate.

The Fourier sector contains characters such as \(\cos(\omega d)\) and
\(\sin(\omega d)\).  The finite-jet sector thickens a Fourier point with terms
such as
\[
  (d/L)^r \exp(-cd/L)\cos(\omega d),
\]
and the corresponding sine components.  The affine sector contains constants
and linear recency terms such as \(-sd/L\).  The LC branch is a compactified
coordinate chart for high-order Fourier--jet behavior.

One algebraic way to view the same object is through constant-coefficient
difference modules.  Finite solutions of equations of the form \(P(E)K=0\) are
spanned by functions resembling
\[
  \binom{d}{r} z^d.
\]
The semisimple case \(z=\exp(i\omega), r=0\) gives the Fourier/RoPE sector.
The non-semisimple cases with \(r>0\) give finite jets.  The point \(z=1\) with
linear terms gives the affine/recency sector.  \pjrope{} packages these regions
into a learnable relative-position space with task-selected sector weights.
Appendix~\ref{app:difference-modules} develops this viewpoint formally:
simple roots give RoPE-like Fourier characters, repeated nonzero roots give
Jordan/Fourier jets, and the repeated unit root gives the ALiBi-like affine
direction.

\subsection{Poincar\'e-type affine completion}

The Poincar\'e-type terminology refers to an affine-completion pattern.  Let
\[
  \mathcal H_{\mathrm{FJ}}
  =
  \operatorname{span}
  \left\{
  (d/L)^r e^{-cd/L}e^{i\omega d}
  \right\}_{\omega,c,r}
\]
denote the homogeneous Fourier--jet sector.  This sector contains RoPE at
\(r=0\) and Jordan-RoPE at \(r>0\).  The affine recency sector is
\[
  \mathcal A_{\mathrm{rec}}
  =
  \operatorname{span}\{1,-d/L\}.
\]
\pjrope{} forms the finite relative-position module
\[
  \mathcal P_{\mathrm{PJ}}
  =
  \mathcal H_{\mathrm{FJ}}
  \oplus
  \mathcal A_{\mathrm{rec}}
  \oplus
  \mathcal H_{\mathrm{LC}},
\]
where \(\mathcal H_{\mathrm{LC}}\) is a light-cone compactified coordinate
chart for the high-order sector.

This is analogous to passing from a homogeneous group to an affine group.  The
homogeneous part supplies phase and finite-jet transformations; the affine part
supplies translation-like additive recency.  In this sense, RoPE/Jordan-RoPE
correspond to the homogeneous phase representation, while \pjrope{} is its
Poincar\'e-type affine extension.  At the scalar-kernel level this affine
completion is implemented as a finite direct-sum module with learned gates, not
as a literal semidirect-product group action.

\begin{definition}[PJ-bias kernel]
Fix a head \(h\), training scale \(L\), lag \(d\ge 0\), frequencies
\(\omega_{\ell h}\), damping rates \(c_{\ell h}\ge 0\), and maximum jet order
\(R\).  A scalar PJ-bias kernel is a learned additive attention term
\[
  K_h(d)=
  g_{\mathrm{FJ},h}K_{\mathrm{FJ},h}(d)
  +g_{\mathrm{aff},h}K_{\mathrm{aff},h}(d)
  +g_{\mathrm{LC},h}K_{\mathrm{LC},h}(d),
\]
where
\[
  K_{\mathrm{FJ},h}(d)=
  \sum_{\ell,r}
  \left(\frac{d}{L}\right)^r e^{-c_{\ell h}d/L}
  \left[
    a_{\ell r h}^{c}\cos(\omega_{\ell h}d)
    +a_{\ell r h}^{s}\sin(\omega_{\ell h}d)
  \right],
\]
\[
  K_{\mathrm{aff},h}(d)=b_{0,h}-s_h d/L.
\]
The sector weights are normalized gates
\[
  (g_{\mathrm{FJ},h},g_{\mathrm{aff},h},g_{\mathrm{LC},h})
  =
  \operatorname{softmax}(\eta_h),
\]
so the three branches form a learned allocation over Fourier--jet, affine, and
LC coordinates.
The LC branch is the compactified chart obtained by replacing raw phase and
amplitude in the same finite family by
\(\phi_L(d)=L\,\operatorname{asinh}(d/L)\) and
\(\beta_L(d)=d/\sqrt{d^2+L^2}\), for example
\[
  \beta_L(d)^r e^{-c_{\ell h}\phi_L(d)/L}
  \cos(\omega_{\ell h}\phi_L(d)).
\]
\end{definition}

\begin{proposition}[Sector containment]
Under parameter restrictions, the PJ position space contains the standard
relative-position primitives used in this paper.  The \(r=0\) Fourier sector is
the scalar character sector associated with RoPE.  Terms with \(r>0\) at fixed
\(\omega\) are finite-jet corrections corresponding to Jordan-RoPE and
high-order Jordan-RoPE.  The \(K_{\mathrm{aff}}\) branch contains ALiBi-like
affine recency.  The LC branch contains the compactified high-order variants
used for stabilization.  At the feature-transform level, exact PJ-rotary
recovers RoPE and Jordan-RoPE by using the corresponding semisimple or
finite-Jordan generator.
\end{proposition}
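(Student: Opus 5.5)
The proof is a sequence of explicit parameter specializations of the PJ-bias kernel in the Definition, one per sector, followed by a separate computation for the feature-transform claim.

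\textbf{Fourier sector.} Set the gates to \(g_{\mathrm{FJ},h}=1\) and \(g_{\mathrm{aff},h}=g_{\mathrm{LC},h}=0\). Since softmax gates are strictly positive, these values are reached only in the limit \(\eta_h\to(+\infty,-\infty,-\infty)\). Alternatively one can zero the coefficients of the unwanted branches, which realizes the same kernel exactly up to a positive global scale that can be absorbed into the remaining coefficients. I will use the coefficient-zeroing route so that containment is exact. Restrict to \(r=0\) and \(c_{\ell h}=0\). Then \(K_{\mathrm{FJ},h}(d)=\sum_\ell a^c_{\ell 0 h}\cos(\omega_{\ell h}d)+a^s_{\ell 0 h}\sin(\omega_{\ell h}d)\). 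This is the real part of \(\sum_\ell \bar\alpha_\ell\chi_{\omega_\ell}(d)\), i.e.\ the scalar character sector of RoPE.

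\textbf{Jet and affine sectors.} Keep \(c_{\ell h}=0\) and allow \(r\le R\) at fixed \(\omega_{\ell h}\). The terms become \((d/L)^r\cos(\omega d)\) and \((d/L)^r\sin(\omega d)\). These span the same space as \(\binom{d}{r}e^{\pm i\omega d}\), \(r\le R\). To see this, note that \(d^r\) and \(\binom{d}{r}\) are related by a unitriangular change of basis with Stirling-number entries. So by the difference-module description of Appendix~\ref{app:difference-modules}, this is the root \(e^{i\omega}\) with multiplicity \(R+1\), i.e.\ (high-order) Jordan-RoPE. For the affine sector, zero all FJ and LC coefficients; then \(K_{\mathrm{aff},h}(d)=b_{0,h}-s_hd/L\) is ALiBi with slope \(s_h/L\). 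This is the \(r\le 1\) jet at \(z=1\). The LC claim holds by definition, since that branch is exactly the \(\phi_L,\beta_L\) reparametrization of the same finite family.

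\textbf{Feature-transform level.} Here I would compute with the generator directly.
\begin{itemize}
\item \emph{Semisimple case.} For \(A=\omega J\) with \(J=\begin{pmatrix}0&-1\\1&0\end{pmatrix}\), \(G(d)=e^{dA}\) is the RoPE rotation. Because \(G(i)^\top G(j)=G(j-i)\) by orthogonality and the group law, \(q^\top G(i)^\top G(j)k\) depends only on the lag.
\item \emph{Finite-Jordan case.} For \(A=I_{m}\otimes\omega J+N\otimes I_2\) with \(N\) nilpotent, the two summands commute. Hence \(e^{dA}=\big(\sum_{r<m}d^rN^r/r!\big)\otimes e^{d\omega J}\), which reproduces the Jordan-RoPE action and its \(d^r e^{i\omega d}\) entries.
\end{itemize}

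\textbf{Main obstacle.} The obstacle is that relativity at the feature level is no longer automatic once \(G\) is non-orthogonal. For a Jordan block, \(G(i)^\top G(j)\neq G(j-i)\) in general. I would therefore follow Jordan-RoPE's convention: apply \(G(i)\) to queries and \(G(-j)^{\top}\)-type (contragredient) transforms to keys, so that the pairing becomes \(q^\top G(i-j)k\). I would then verify this via the one-parameter group property \(G(a)G(b)=G(a+b)\). Pinning down this pairing convention precisely is the step that requires care. The rest of the proof is bookkeeping.
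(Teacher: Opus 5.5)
Your proposal is correct and follows the paper's proof sketch: sector-by-sector parameter restrictions of the PJ-bias kernel, the finite-Jordan reading of the $r>0$ terms, and the LC claim holding by construction as a coordinate substitution. Your additions fill in details without changing the argument. These are zeroing branch coefficients (softmax gates reach zero only in a limit), imposing $c_{\ell h}=0$, and the explicit $e^{dA}=e^{dN}\otimes e^{d\omega J}$ computation for the feature-level claim, which the sketch leaves implicit in the appendix formula for $e^{dJ}$. One small convention slip: $\tilde q=G(i)q$, $\tilde k=G(-j)^\top k$ gives $q^\top G(i-j)^\top k$, so use $\tilde q=G(i)^\top q$, $\tilde k=G(-j)k$ if you want exactly $q^\top G(i-j)k$; either way the score depends only on the lag.
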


\begin{proof}[Proof sketch]
Setting \(g_{\mathrm{aff}}=g_{\mathrm{LC}}=0\) and \(r=0\) leaves ordinary
Fourier characters.  Keeping a fixed frequency and allowing \(r>0\) gives the
finite derivatives of that character curve, equivalently the polynomial
coordinates generated by a finite Jordan block.  Setting
\(g_{\mathrm{FJ}}=g_{\mathrm{LC}}=0\) gives the affine bias.  The LC sector is a
coordinate substitution applied to the same finite basis, so it is a
compactified subfamily; it does not assert a separate exact rotary action.
\end{proof}

\begin{figure}[!t]
\centering
\begin{tikzpicture}[
  >=Stealth,
  font=\small,
  axis/.style={->,line width=1.00pt,black!88},
  grid/.style={gray!36,dashed,line width=.32pt},
  faint/.style={gray!22,dashed,line width=.26pt},
  panelborder/.style={line width=.90pt,draw opacity=.88},
  ticklabel/.style={font=\scriptsize,text=black!72},
  fjtext/.style={font=\scriptsize,text=purple!66!black,inner sep=1pt},
  afftext/.style={font=\scriptsize,text=green!40!black,inner sep=1pt},
  lctext/.style={font=\scriptsize,text=teal!62!black,inner sep=1pt}
]

\node[font=\Large\bfseries] at (8.15,6.10)
  {Adaptive PJ-RoPE as a 3D spectral--jet sector space};
\node[font=\small\itshape,text=gray!66!black] at (8.15,5.70)
  {spectral base $\times$ jet fiber $\times$ sector slices, with head gates in $\Delta^2$};

\begin{scope}[
  x={(1.18cm,0cm)},
  y={(-0.90cm,0.56cm)},
  z={(0cm,1.10cm)}
]

\def\W{3.25}
\def\H{3.55}
\def\xFJ{0.80}
\def\xAFF{5.00}
\def\xLC{9.40}
\def\xMAX{14.30}

\foreach \s in {0,1,2}{\draw[grid] (.25,\s,0) -- (13.55,\s,0);}
\foreach \x in {.70,2.43,4.05,5.00,6.63,8.25,9.40,11.03,12.65,14.00}{
  \draw[faint] (\x,0,0) -- (\x,2.52,0);
}


\path[panelborder,draw=teal!68!black,fill=teal!45,fill opacity=.13]
  (\xLC,2,0) -- ++(\W,0,0) -- ++(0,0,\H) -- ++(-\W,0,0) -- cycle;
\node[font=\bfseries,text=teal!68!black] at ($ (\xLC,2,\H) + (.5*\W,0,-.34) $)
  {LC sector};

\def\xLCFIB{.36}
\def\xLCFORM{.70}
\draw[teal!76!black,line width=1.25pt] ($ (\xLC,2,0) + (\xLCFIB*\W,0,.58) $)
  -- ($ (\xLC,2,0) + (\xLCFIB*\W,0,2.16) $);
\foreach \r in {.58,.98,1.38,1.78,2.16}{
  \fill[teal!76!black] ($ (\xLC,2,0) + (\xLCFIB*\W,0,\r) $) circle (2.15pt);
}
\node[lctext] at ($ (\xLC,2,0) + (\xLCFIB*\W,0,2.47) $) {$J^{LC}_{\omega}$};

\draw[teal!55!black,opacity=.55,line width=.42pt]
  ($ (\xLC,2,0) + (.44*\W,0,1.38) $) -- ($ (\xLC,2,0) + (.56*\W,0,1.38) $);
\node[lctext,font=\scriptsize] at ($ (\xLC,2,0) + (.50*\W,0,1.48) $) {$\leftrightarrow$};

\node[lctext,align=center,font=\tiny,text width=1.45cm]
  at ($ (\xLC,2,0) + (.73*\W,0,1.53) $)
  {$\mathrm{mode}_r(d)$\\[-1pt]
   $\beta_L(d)^r$\\[-1pt]
   $e^{i\omega\phi_L(d)}$\\[-1pt]
   \textit{compact chart}};
\node[ticklabel,text=teal!65!black] at ($ (\xLC,2,0) + (.50*\W,0,.15) $) {$z_{\omega}$};

\path[panelborder,draw=green!52!black,fill=green!45,fill opacity=.13]
  (\xAFF,1,0) -- ++(\W,0,0) -- ++(0,0,\H) -- ++(-\W,0,0) -- cycle;
\node[font=\bfseries,text=green!43!black,align=center] at ($ (\xAFF,1,\H) + (.5*\W,0,-.43) $)
  {Affine / ALiBi\\sector};

\def\xAFIB{.43}
\def\yAONE{.72}
\def\yAD{1.78}
\draw[green!58!black,line width=1.28pt] ($ (\xAFF,1,0) + (\xAFIB*\W,0,\yAONE) $)
  -- ($ (\xAFF,1,0) + (\xAFIB*\W,0,\yAD) $);
\foreach \r in {\yAONE,\yAD}{
  \fill[green!58!black] ($ (\xAFF,1,0) + (\xAFIB*\W,0,\r) $) circle (2.25pt);
}
\node[afftext,anchor=east] at ($ (\xAFF,1,0) + (.30*\W,0,\yAONE) $) {$1$};
\node[afftext,anchor=east] at ($ (\xAFF,1,0) + (.30*\W,0,\yAD) $) {$d$};
\node[afftext] at ($ (\xAFF,1,0) + (\xAFIB*\W,0,2.18) $) {$J^1_1$};
\node[afftext,align=center,font=\tiny,text width=1.28cm]
  at ($ (\xAFF,1,0) + (.69*\W,0,1.34) $)
  {$A_{\rm rec}=$\\[-1pt]$\langle 1,-d/L\rangle$};
\node[ticklabel,text=green!43!black] at ($ (\xAFF,1,0) + (.50*\W,0,.15) $) {$z=1$};

\path[panelborder,draw=purple!70!black,fill=purple!45,fill opacity=.13]
  (\xFJ,0,0) -- ++(\W,0,0) -- ++(0,0,\H) -- ++(-\W,0,0) -- cycle;
\node[font=\bfseries,text=purple!70!black] at ($ (\xFJ,0,\H) + (.5*\W,0,-.34) $)
  {Fourier--jet sector};

\draw[purple!78!black,line width=1.30pt] ($ (\xFJ,0,0) + (.55*\W,0,.55) $)
  -- ($ (\xFJ,0,0) + (.55*\W,0,2.41) $);
\foreach \r in {.42,1.37,2.32}{
  \fill[purple!78!black] ($ (\xFJ,0,0) + (.55*\W,0,\r) $) circle (2.45pt);
}
\node[fjtext,anchor=east] at ($ (\xFJ,0,0) + (.39*\W,0,.55) $) {$e^{i\omega d}$};
\node[fjtext,anchor=east] at ($ (\xFJ,0,0) + (.39*\W,0,1.48) $) {$d\,e^{i\omega d}$};
\node[fjtext,anchor=east] at ($ (\xFJ,0,0) + (.39*\W,0,2.41) $) {$d^2e^{i\omega d}$};
\node[fjtext] at ($ (\xFJ,0,0) + (.55*\W,0,2.68) $) {$J^R_{\omega}$};

\node[fjtext,anchor=west] at ($ (\xFJ,0,0) + (.70*\W,0,.55) $) {$r=0$};
\node[fjtext,anchor=west] at ($ (\xFJ,0,0) + (.70*\W,0,1.48) $) {$r=1$};
\node[fjtext,anchor=west] at ($ (\xFJ,0,0) + (.70*\W,0,2.41) $) {$r=2$};
\node[ticklabel,text=purple!70!black] at ($ (\xFJ,0,0) + (.13*\W,0,.15) $) {$z_{\omega_1}$};
\node[ticklabel,text=purple!70!black] at ($ (\xFJ,0,0) + (.55*\W,0,.15) $) {$z_{\omega}$};
\node[ticklabel,text=purple!70!black] at ($ (\xFJ,0,0) + (.87*\W,0,.15) $) {$z_{\omega_2}$};

\draw[axis] (0,0,0) -- (\xMAX,0,0);
\node[font=\small,text=black!82] at (9.0,0,-.78) {spectral base $z_{\omega}=e^{-c/L+i\omega}$};
\draw[axis] (0,0,0) -- (0,0,4.15);
\node[font=\small,text=black!82,anchor=east] at (-.12,0,3.97) {jet order $r$};
\draw[axis] (0,0,0) -- (0,2.80,0);
\node[font=\small,text=black!82,align=center,anchor=east] at (-.55,2.55,.15) {sector\\slices};

\node[font=\scriptsize,text=purple!70!black,anchor=east] at (-.70,.18,.13) {FJ};
\node[font=\scriptsize,text=green!43!black,anchor=east] at (-.70,1.00,.13) {affine};
\node[font=\scriptsize,text=teal!65!black,anchor=east] at (-.70,2.00,.13) {LC};

\end{scope}


\begin{scope}[shift={(15.25,2.55)},scale=1.05]
  \coordinate (FJg) at (0,0);
  \coordinate (AG)  at (1.60,0);
  \coordinate (LCg) at (.80,1.38);
  \draw[line width=.82pt,black!80] (FJg)--(AG)--(LCg)--cycle;
  \node[font=\scriptsize,text=purple!70!black,anchor=north east]
    at ($ (FJg) + (-.05,-.08) $) {$g_{\rm FJ}$};
  \node[font=\scriptsize,text=green!43!black,anchor=north west]
    at ($ (AG) + (.05,-.08) $) {$g_{\rm aff}$};
  \node[font=\scriptsize,text=teal!65!black,anchor=west] at (.88,1.43) {$g_{\rm LC}$};
  \node[font=\small\bfseries,text=black!78] at (.80,1.88) {sector gate $\Delta^2$};
  \fill[orange!30] (.46,.37) circle (2.1pt);
  \fill[orange!55] (.63,.50) circle (2.1pt);
  \fill[orange!90!red] (.84,.63) circle (3.0pt);
  \draw[->,orange!90!red,line width=.72pt] (.46,.37) to[bend right=14] (.84,.63);
  \node[font=\tiny,text=gray!65!black,align=center,text width=2.45cm] at (.80,-.93)
    {$K=g_{\rm FJ}K_{\rm FJ}$\\[-1pt]
     $+g_{\rm aff}K_{\rm aff}$\\[-1pt]
     $+g_{\rm LC}K_{\rm LC}$};
\end{scope}

\end{tikzpicture}
\caption{\pjrope{} position space.  The framework organizes homogeneous
Fourier--jet coordinates, affine recency, and LC-stabilized high-order
coordinates into a shared relative-position space with adaptive sector
diagnostics.  The LC branch is a compactified coordinate chart for high-order
Fourier--jet behavior, not an exact rotary group action.}
\label{fig:geometry}
\end{figure}
\FloatBarrier

\section{Implementation Regimes}

The theory permits several implementations.  The current experiments separate
four regimes because each supports a different claim.

\begin{table}[H]
\centering
\caption{Implementation regimes.  The table separates scalar additive kernels
from exact feature-transform claims.}
\label{tab:regimes}
\small
\begin{tabularx}{\linewidth}{lXcX}
\toprule
Regime & Object & Exact feature action? & Main use \\
\midrule
PJ-bias & Scalar additive attention kernel & no & Scalable kernel diagnostics and LM experiments \\
PJ-rotary exact & Q/K feature transform & yes & Closure tests and feature-transform control \\
Scaled-exact PJ-rotary & Normalized generator feature transform & yes & Controlled feature-transform baseline \\
LC-PJ bias & Compactified scalar attention kernel & no & Long-context stabilization \\
\bottomrule
\end{tabularx}
\end{table}

This taxonomy separates the claims supported by each implementation.  PJ-bias can recover scalar
Fourier, jet, affine, and LC kernels, but it is not itself the RoPE feature
transform.  Exact PJ-rotary is the object used for feature-level representation
claims.  LC-PJ bias is a stabilized scalar path: it is designed to control
long-context growth, not to provide an exact rotary group action.

\section{Adaptive PJ}

Adaptive PJ turns the position space into a measurable selection problem.  Each
head receives sector gates over the Fourier/jet, affine, and light-cone
branches.  Within the FJ or LC branch, an order spectrum allocates mass across
jet orders.  The diagnostics determine whether the learned position primitive
moves toward the sector implied by the teacher or task.

The three diagnostics answer complementary questions.  Effective mass asks
where gated parameter magnitude is allocated.  Functional energy asks which
orders contribute to the realized kernel over the evaluation window.
Leave-one-order-out asks whether removing an order changes the fitted function.

Raw parameters are not enough for this diagnostic.  Different basis functions
can have different scales and supports, so the paper reports both a
parameter-side quantity, \emph{effective mass}, and a function-side quantity,
\emph{functional energy}.  Let
\[
  g_h=\operatorname{softmax}(\eta_h)
  =(g_{h,\mathrm{FJ}},g_{h,\mathrm{aff}},g_{h,\mathrm{LC}})
\]
be the sector gates for head \(h\).  Within the FJ and LC sectors, let
\(\alpha^B_{h,r}\) be the conditional order spectrum for branch
\(B\in\{\mathrm{FJ},\mathrm{LC}\}\), and let
\[
  |\zeta^B_{h,r}| =
  \sqrt{(\zeta^{B,c}_{h,r})^2+(\zeta^{B,s}_{h,r})^2}
\]
denote the signed sine/cosine amplitude magnitude at order \(r\).  The
branch-specific effective mass is
\[
  M^B_{h,r}=g_{h,B}\alpha^B_{h,r}|\zeta^B_{h,r}|,
  \qquad B\in\{\mathrm{FJ},\mathrm{LC}\}.
\]
The order-level effective mass used in the diagnostic plots is
\[
  \bar M_{h,r}
  =
  \frac{M^{\mathrm{FJ}}_{h,r}+M^{\mathrm{LC}}_{h,r}}
       {\sum_{j=0}^{R}
        \left(M^{\mathrm{FJ}}_{h,j}+M^{\mathrm{LC}}_{h,j}\right)+\epsilon}.
\]
This is a parameter-side answer to the question: where did the model allocate
its gated high-order amplitude?  The affine branch has no jet order, so affine
selection is reported through \(g_{h,\mathrm{aff}}\) and the learned slope.

Functional energy measures realized-kernel scale over a lag window.  Write the
order-\(r\) realized component as
\[
  C_{h,r}(d)
  =
  g_{h,\mathrm{FJ}} C^{\mathrm{FJ}}_{h,r}(d)
  +
  g_{h,\mathrm{LC}} C^{\mathrm{LC}}_{h,r}(d),
\]
where each \(C^B_{h,r}\) includes the corresponding basis function, order
weight \(\alpha^B_{h,r}\), amplitude \(\zeta^B_{h,r}\), damping, and LC
coordinate substitution if applicable.  For a window \(W\) with weights
\(w_d\), define
\[
  \|f\|_{2,W}
  =
  \left(\sum_{d\in W} w_d f(d)^2\right)^{1/2}.
\]
The functional energy ratio is
\[
  E_{h,r}(W)
  =
  \frac{\|C_{h,r}\|_{2,W}}
       {\left\|\sum_{j=0}^{R} C_{h,j}\right\|_{2,W}+\epsilon}.
\]
Unlike \(\bar M_{h,r}\), this quantity accounts for finite-window scale,
damping, LC compactification, phase cancellation, and extrapolation length.  It
is therefore not forced to sum to one when order components are not orthogonal;
it is a diagnostic of realized functional scale.

Finally, leave-one-order-out asks whether the order is needed for the fitted
function:
\[
  \Delta^{\mathrm{LOO}}_{h,r}(W)
  =
  \operatorname{MSE}_W\!\left(K_h-C_{h,r},y\right)
  -
  \operatorname{MSE}_W\!\left(K_h,y\right).
\]
A useful order should have coherent evidence across these views: nontrivial
effective mass, nontrivial functional energy on the evaluation window, and a
positive leave-one-order-out delta.  This separates parameter selection from
actual function-level contribution.

\section{Light-cone PJ}

Raw high-order jets contain powers of distance.  At long context, those powers
can make transformed features, logits, and cache scales grow rapidly.  LC-PJ
replaces the raw coordinate with a compactified phase and a saturating
amplitude:
\[
  \phi_L(d)=L\,\operatorname{asinh}(d/L),\qquad
  \beta_L(d)=\frac{d}{\sqrt{d^2+L^2}}.
\]
The LC coordinate has a rapidity form.  Writing
\[
  d/L=\sinh\eta,
\]
we obtain
\[
  \phi_L(d)=L\eta,\qquad
  \beta_L(d)=\tanh\eta.
\]
Thus LC-PJ replaces raw distance by a rapidity-like phase coordinate and a
velocity-like bounded amplitude.  High-order jet powers are applied to
\(\beta_L(d)\), so they remain bounded at large lag:
\[
  |\beta_L(d)|\le 1.
\]
The price is that the rapidity coordinate grows only logarithmically in the far
field, and
\[
  \partial_d\phi_L(d)=\frac{1}{\sqrt{1+(d/L)^2}},
\]
so phase resolution is compressed at long range.  A wrong-scale controlled
contrast using \(\operatorname{asinh}(d/L)\) without the outer \(L\) is stable in
norm but collapses usable phase resolution.  The central LC claim is therefore
a measurable stability--resolution tradeoff.

\begin{figure}[t]
\centering
\resizebox{\linewidth}{!}{\begin{tikzpicture}[
  x=1cm,y=1cm,
  >=Latex,
  panel/.style={rounded corners=5pt, line width=0.8pt, fill=#1!4, draw=#1},
  note/.style={rounded corners=4pt, dashed, line width=0.7pt, draw=#1, fill=white, text=#1},
  dot/.style={circle, inner sep=0pt, minimum size=4pt, fill=#1, draw=#1},
  every node/.style={font=\sffamily}
]
\node[font=\sffamily\bfseries\Large] at (7.8,9.55) {From Fourier Character Points to Jordan Jet Thickenings and Affine Recency};

\draw[panel=pjblue] (0.15,1.05) rectangle (4.75,8.95);
\draw[panel=pjpurple] (5.20,1.05) rectangle (10.00,8.95);
\draw[panel=pjgreen] (10.45,1.05) rectangle (15.25,8.95);
\draw[pjblue, line width=0.8pt] (0.15,8.05) -- (4.75,8.05);
\draw[pjpurple, line width=0.8pt] (5.20,8.05) -- (10.00,8.05);
\draw[pjgreen, line width=0.8pt] (10.45,8.05) -- (15.25,8.05);

\node[circle, fill=pjblue, text=white, font=\bfseries\scriptsize] at (0.55,8.48) {1};
\node[anchor=west, text=pjblue, font=\bfseries\scriptsize, align=left] at (0.92,8.50) {RoPE: Fourier\\character point};
\node[circle, fill=pjpurple, text=white, font=\bfseries\scriptsize] at (5.62,8.48) {2};
\node[anchor=west, text=pjpurple, font=\bfseries\scriptsize, align=left] at (5.98,8.50) {Jordan-RoPE:\\finite jet thickening};
\node[circle, fill=pjgreen, text=white, font=\bfseries\scriptsize] at (10.86,8.48) {3};
\node[anchor=west, text=pjgreen, font=\bfseries\scriptsize, align=left] at (11.23,8.50) {ALiBi / affine\\recency sector};

\node[anchor=west, text=pjblue, font=\itshape\scriptsize] at (0.42,7.55) {spectral / frequency space};
\begin{scope}[shift={(0.55,3.65)}]
  \coordinate (O) at (0.22,0);
  \def\Runit{2.12}
  \coordinate (Zomega) at ($(O)+(45:\Runit)$);
  \draw[->, thick] (0,0) -- (3.50,0) node[right] {\(\mathrm{Re}\)};
  \draw[->, thick] (0.22,-0.25) -- (0.22,3.35) node[above] {\(\mathrm{Im}\)};
  \draw[pjblue, thick, ->] ($(O)+(90:\Runit)$)
    arc[start angle=90,end angle=8,radius=\Runit];
  \node[dot=pjblue, minimum size=5pt] at (Zomega) {};
  \draw[pjblue, ->] (2.72,2.30) -- ($(Zomega)+(.06,.07)$);
  \node[text=pjblue, font=\scriptsize] at (1.14,2.48) {\(e^{i\omega d}\)};
  \node[text=pjblue, font=\scriptsize, align=center] at (3.06,2.66) {character\\point};
  \node[text=pjblue, font=\scriptsize] at ($(Zomega)+(.35,-.12)$) {\(z_\omega\)};
\end{scope}
\node[note=pjblue, font=\scriptsize, minimum width=2.45cm, minimum height=0.55cm] at (2.45,2.55) {\(\chi_\omega(d)=e^{i\omega d}\)};
\node[text=pjblue, font=\scriptsize] at (2.45,1.55) {order-0 / semisimple};

\node[anchor=west, text=pjpurple, font=\itshape\scriptsize] at (5.42,7.55) {spectral / frequency space};
\begin{scope}[shift={(5.55,3.65)}]
  \coordinate (O) at (0.22,0);
  \def\Runit{2.12}
  \coordinate (Zomega) at ($(O)+(45:\Runit)$);
  \draw[->, thick] (0,0) -- (3.20,0) node[right] {\(\mathrm{Re}\)};
  \draw[->, thick] (0.22,-0.25) -- (0.22,3.35) node[above] {\(\mathrm{Im}\)};
  \draw[pjblue, thick, ->] ($(O)+(90:\Runit)$)
    arc[start angle=90,end angle=8,radius=\Runit];
  \foreach \r in {0.30,0.50,0.70} {
    \draw[pjpurple!45, dashed] (Zomega) circle (\r);
  }
  \fill[pjpurple!20] (Zomega) circle (0.68);
  \node[dot=pjpurple, minimum size=5pt] at (Zomega) {};
  \draw[pjpurple, thick, ->] (Zomega) -- ($(Zomega)+(.50,.54)$);
  \draw[pjpurple, thick, ->] (Zomega) -- ($(Zomega)+(.58,-.34)$);
  \node[text=pjblue, font=\scriptsize] at (1.08,2.48) {\(e^{i\omega d}\)};
  \node[text=pjpurple, anchor=west, align=left, font=\scriptsize]
    at (2.62,2.72) {\(d\,e^{i\omega d}\)\\[-2pt]{\tiny 1st jet}};
  \node[text=pjpurple, anchor=west, align=left, font=\scriptsize]
    at (2.62,1.72) {\(d^2e^{i\omega d}\)\\[-2pt]{\tiny 2nd jet}};
  \node[text=pjpurple, anchor=west, font=\scriptsize] at (2.72,1.13) {\(\vdots\)};
  \node[text=pjpurple, anchor=west, align=left, font=\scriptsize]
    at (2.62,0.55) {\(d^k e^{i\omega d}\)\\[-2pt]{\tiny k-th jet}};
\end{scope}
\node[note=pjpurple, font=\scriptsize, minimum width=3.0cm, minimum height=0.55cm] at (7.65,2.55) {non-semisimple finite jet};
\node[text=pjpurple, align=center, font=\scriptsize] at (7.65,1.55) {local nilpotent directions thicken\\the character point};

\node[anchor=west, text=pjgreen, font=\itshape\scriptsize] at (10.68,7.55) {affine recency space};
\node[anchor=west, text=pjgreen, font=\itshape\scriptsize] at (10.68,7.20) {(separate from Fourier curve)};
\begin{scope}[shift={(11.02,3.15)}]
  \draw[<->, thick] (0.55,3.25) -- (3.45,0.35) node[right] {\(d\)};
  \node[dot=pjgreen, minimum size=5pt] at (1.10,2.70) {};
  \node[text=pjgreen, font=\scriptsize] at (1.42,2.86) {\(1\)};
  \node[font=\scriptsize, text=pjgray] at (1.96,2.15) {\(0\)};
  \draw[gray] (2.05,1.75) -- (2.35,2.05);
  \node[dot=pjgreen, minimum size=5pt] at (2.95,0.85) {};
  \node[text=pjgreen, font=\scriptsize] at (3.45,0.78) {\(-d/L\)};
\end{scope}
\node[note=pjgreen, align=center, font=\scriptsize, minimum width=2.35cm, minimum height=0.75cm] at (12.90,2.60) {translation-like\\affine direction};
\node[text=pjgreen, align=center, font=\scriptsize] at (12.90,1.55) {complementary affine sector,\\not part of Fourier jet thickening};

\draw[->, line width=1.25pt] (4.84,5.00) -- (5.07,5.00);
\draw[->, line width=1.25pt] (10.09,5.00) -- (10.32,5.00);

\draw[rounded corners=5pt, line width=0.8pt, fill=white] (0.45,0.22) rectangle (14.95,0.82);
\node[font=\scriptsize, align=center, text width=13.2cm] at (7.7,0.52) {{\bfseries RoPE} = simple Fourier point;\quad {\color{pjpurple}\bfseries Jordan-RoPE} = jet thickening;\quad {\color{pjgreen}\bfseries ALiBi} = affine recency.};

\end{tikzpicture}}
\caption{Root-level difference-module schematic.  RoPE is a simple Fourier
root, Jordan-RoPE is a repeated nonzero root that generates finite Fourier
jets, and ALiBi supplies the repeated unit-root affine recency direction.}
\label{fig:root-schematic}
\end{figure}

\section{Summary of Questions and Observations}

Table~\ref{tab:claims} summarizes the experimental questions, observations,
and readings used in the rest of the paper.

\begin{table}[H]
\centering
\caption{Summary of questions, observations, and readings.}
\label{tab:claims}
\small
\begin{tabularx}{\linewidth}{p{0.24\linewidth}p{0.40\linewidth}p{0.26\linewidth}}
\toprule
Question & Observation & Reading \\
\midrule
Does PJ contain Fourier, jet, and affine primitives? &
Fixed-kernel probes recover the intended sectors. &
The scalar PJ space spans the designed primitive families. \\
Which sector do controlled teachers select? &
Adaptive gates, effective mass, functional energy, and leave-one-order-out diagnostics move toward the teacher sector. &
Sector selection is measurable beyond raw parameter size. \\
Can trainable attention use the sectors? &
Signed jet teachers, attention recency teachers, and LC-core teachers activate the corresponding branches. &
The sectors remain usable inside a small causal-attention model. \\
What do language and music-token runs select? &
Language runs favor NTK+affine at 32768 tokens, while music-token streams keep LC/affine variants strong with small high-order mass. &
Task allocation differs across domains and training regimes. \\
What does LC change? &
LC bounds Q/K and cache/logit scale while reducing far-range phase span. &
LC gives a stability--resolution tradeoff. \\
\bottomrule
\end{tabularx}
\end{table}

Detailed reproducibility notes are placed outside the main narrative.

\section{Experiments}

The experiments describe how different tasks occupy the PJ position space.
First, fixed kernels test scalar containment.  Second, adaptive diagnostics
turn controlled teachers into sector-allocation measurements.  Third, synthetic
sequence tasks test whether the sectors can be used inside trainable attention.
Fourth, byte-level language and music-token runs show how natural streams
allocate mass under the small-model training regimes used here.  Finally, LC
diagnostics measure the stability--resolution exchange introduced by the
compactified chart.

The experiments were run in separate diagnostic suites.  We present them by
their role in the argument, and a separate reproducibility bundle records the
exact source files.

\paragraph{Experimental setup.}
Unless otherwise noted, trainable sequence experiments use small causal
Transformers trained at short context and evaluated by validation
cross-entropy at longer contexts.  The default trainable model has two layers,
embedding dimension \(96\), four attention heads, and MLP ratio \(2\), giving
roughly \(0.20\)M parameters for byte-level language runs and \(0.15\)M
parameters for synthetic query classifiers.  The language boundary tests use
byte-level tokenization, train at context length \(1024\), and report
three-seed sampled 32768-byte-token stress evaluations on Tiny Shakespeare,
WikiText-2, and a Project Gutenberg War and Peace corpus.  A 32768-token
byte-level window is about 32KB of raw text; these rows are tokenizer-level
extrapolation stress tests, not document-scale semantic context evaluations.
The music-token tests use symbolic MIDI-derived byte
streams, not audio labels; the 32768-token MAESTRO and MusicNet summaries train
at context length \(512\) and report three model seeds with one long-context
evaluation batch per seed.  MAESTRO controls use a random 128-file train-split
MIDI token stream with controller/pedal events.  MusicNet selector A and
selector B are two random 64-piece reference-MIDI program-token streams,
sampled with seeds 29 and 37 from the MusicNet reference-MIDI archive.
Synthetic query-LM bridges freeze Q/K content scores in the main setting so
that the positional branch must carry the teacher signal.

\subsection{Fixed-kernel diagnostics}

The first diagnostic isolates expressivity at the scalar-kernel level.  We fit
fixed PJ bases to target kernels representing pure phase, first/second/third
jets, affine recency, mixed recency-plus-weak-jet behavior, and LC targets.  A
successful result includes both low loss and sector match: the selected basis
should match the intended sector.

The recovery summary shows that phase targets are recovered by RoPE-like bases,
jet targets by the corresponding finite-jet bases, affine targets by affine
bases, and LC targets by LC bases.  Table~\ref{tab:sector-recovery} gives the
numerical sector summary, and Figure~\ref{fig:recovery} gives the visual
summary.  The fixed-kernel results give the scalar containment picture: phase,
finite-jet, affine, and LC targets are recovered by their corresponding
sectors.

\subsection{Adaptive sector recovery}

The fixed-basis experiment asks whether the space is expressive.  The adaptive
experiment asks whether a learned PJ kernel can find the right region of that
space.  Controlled teachers are chosen so that the expected sector is known:
Fourier teachers should select order-zero FJ mass, higher jet teachers should
shift energy to higher FJ orders, affine teachers should activate the recency
branch, and LC teachers should select the light-cone branch.

Adaptive PJ turns the position space into an observable allocation problem.  We
report sector gates, effective mass, functional order energy, and
leave-one-order-out deltas side by side because damping, window length, LC
compactification, and phase cancellation can change the realized functional
contribution of a learned coefficient.  The main diagnostics are shown in
Figure~\ref{fig:recovery}.  Adaptive PJ moves toward the teacher sector when
the teacher is known.

\begin{table}[H]
\centering
\caption{Fixed-kernel and adaptive sector recovery summary.  Gates are reported
as FJ/A/LC.  The top order is the dominant functional-energy order inside the
selected high-order branch; \textemdash{} indicates that the selected sector
has no jet order.}
\label{tab:sector-recovery}
\small
\begin{tabularx}{\linewidth}{lXrcc}
\toprule
Target & Selected sector & Eval MSE & Gates & Top order \\
\midrule
Phase & Fourier / RoPE-like & \(7.62{\times}10^{-6}\) & 0.91/0.06/0.03 & 0 \\
First jet & first finite jet & \(3.70{\times}10^{-4}\) & 0.87/0.06/0.06 & 1 \\
Second jet & second finite jet & \(2.68{\times}10^{-3}\) & 0.89/0.05/0.05 & 2 \\
Third jet & third finite jet & \(7.63{\times}10^{-3}\) & 0.79/0.12/0.08 & 3 \\
Linear & affine recency & \(2.76{\times}10^{-6}\) & 0.08/0.85/0.07 & \textemdash \\
Recency + weak jet & affine with weak jet & \(1.40{\times}10^{-4}\) & 0.23/0.60/0.17 & 2 \\
LC core & LC compactified & \(6.96{\times}10^{-5}\) & 0.04/0.04/0.92 & 0 \\
LC + affine & LC plus affine & \(1.72{\times}10^{-3}\) & 0.04/0.12/0.83 & 0 \\
\bottomrule
\end{tabularx}
\end{table}

\begin{figure}[H]
\centering
\includegraphics[width=0.48\linewidth]{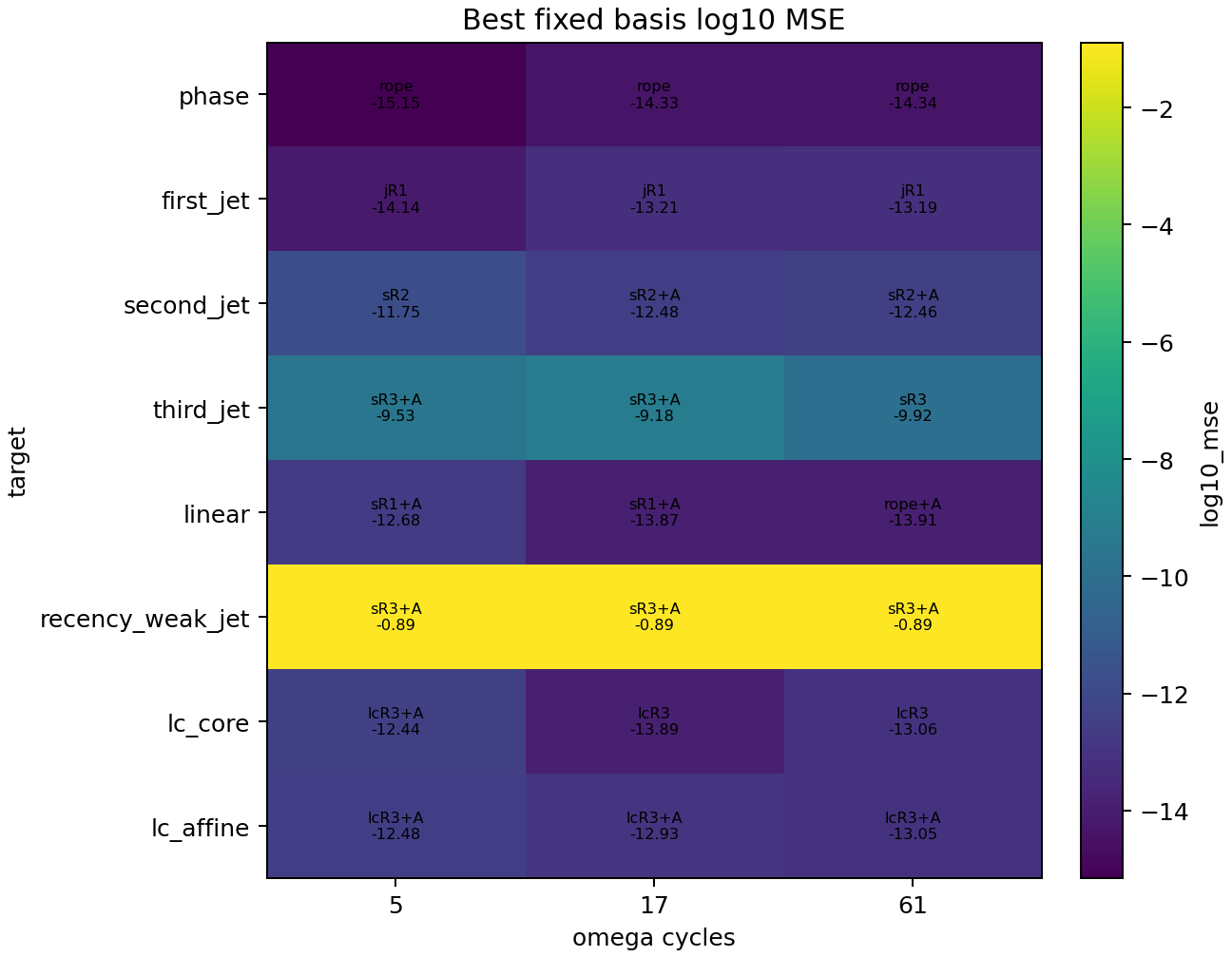}
\includegraphics[width=0.48\linewidth]{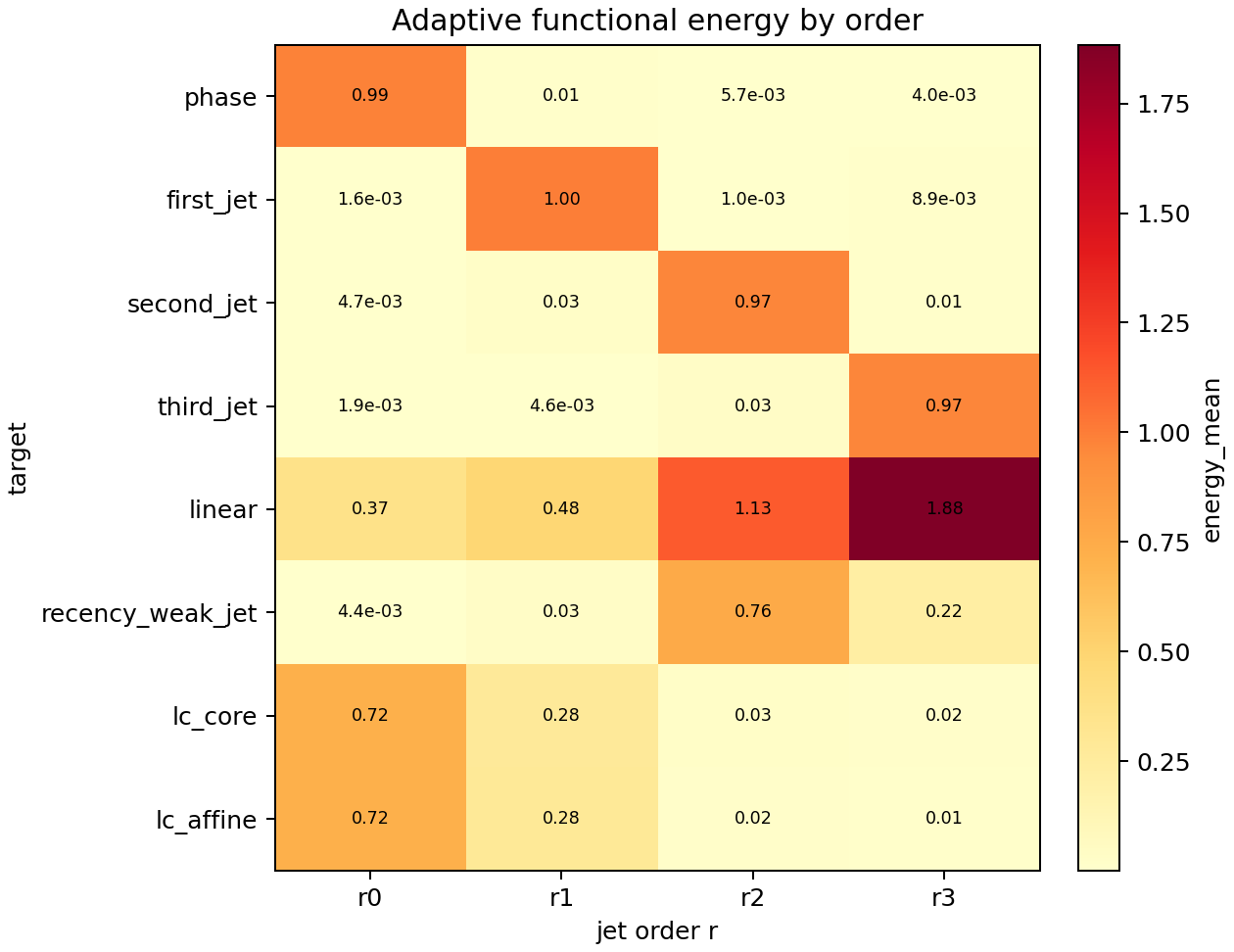}
\includegraphics[width=0.62\linewidth]{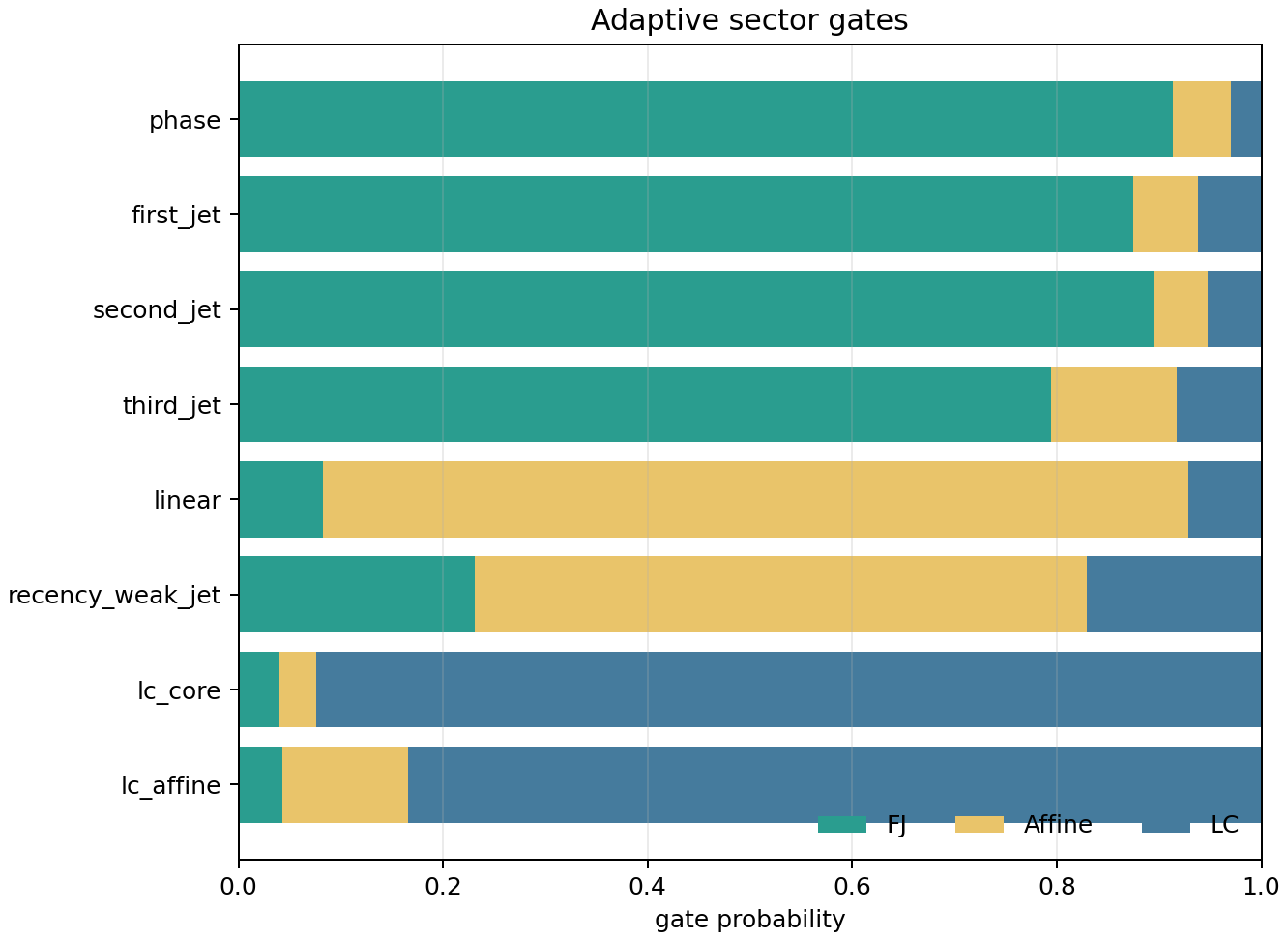}
\caption{Fixed-kernel and adaptive sector recovery.  Top left: fixed-basis
recovery error.  Top right: adaptive functional order energy.  Bottom: adaptive
sector gates.}
\label{fig:recovery}
\end{figure}

\subsection{Synthetic sequence bridge}

Static kernel recovery leaves open whether a Transformer can use the same
structure.  The synthetic sequence bridge therefore places PJ sectors inside a
small trainable causal-attention model.  The tasks are controlled query-LM
problems with signed jet teachers, affine attention teachers, and LC-core
teachers.  In the main bridge setting, Q/K content scores are frozen so that the
positional branch must carry the relevant signal.

The evidence shows that affine teachers are solved by the affine sector, signed
first/second-jet teachers are solved by FJ variants under multi-length training,
and LC-core teachers activate the LC path.  These synthetic tasks isolate the
trainable-attention step: the positional branch carries the teacher signal, and
the learned gates show whether the corresponding sector is used.

\begin{figure}[H]
\centering
\includegraphics[width=0.95\linewidth]{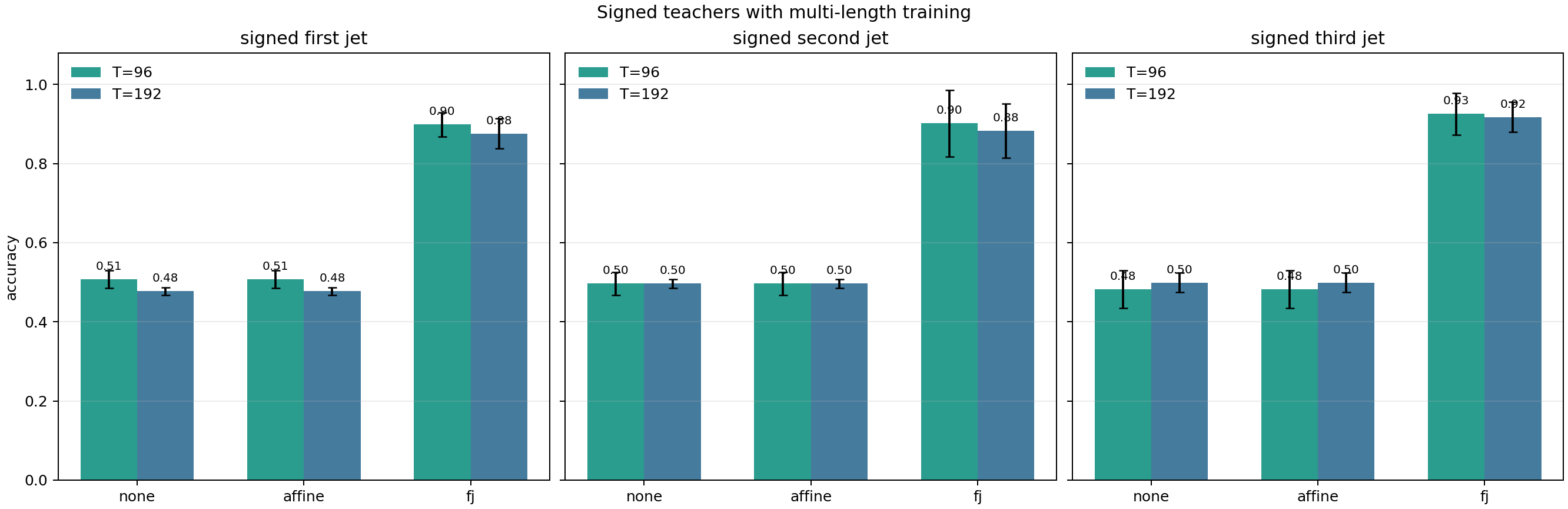}
\includegraphics[width=0.52\linewidth]{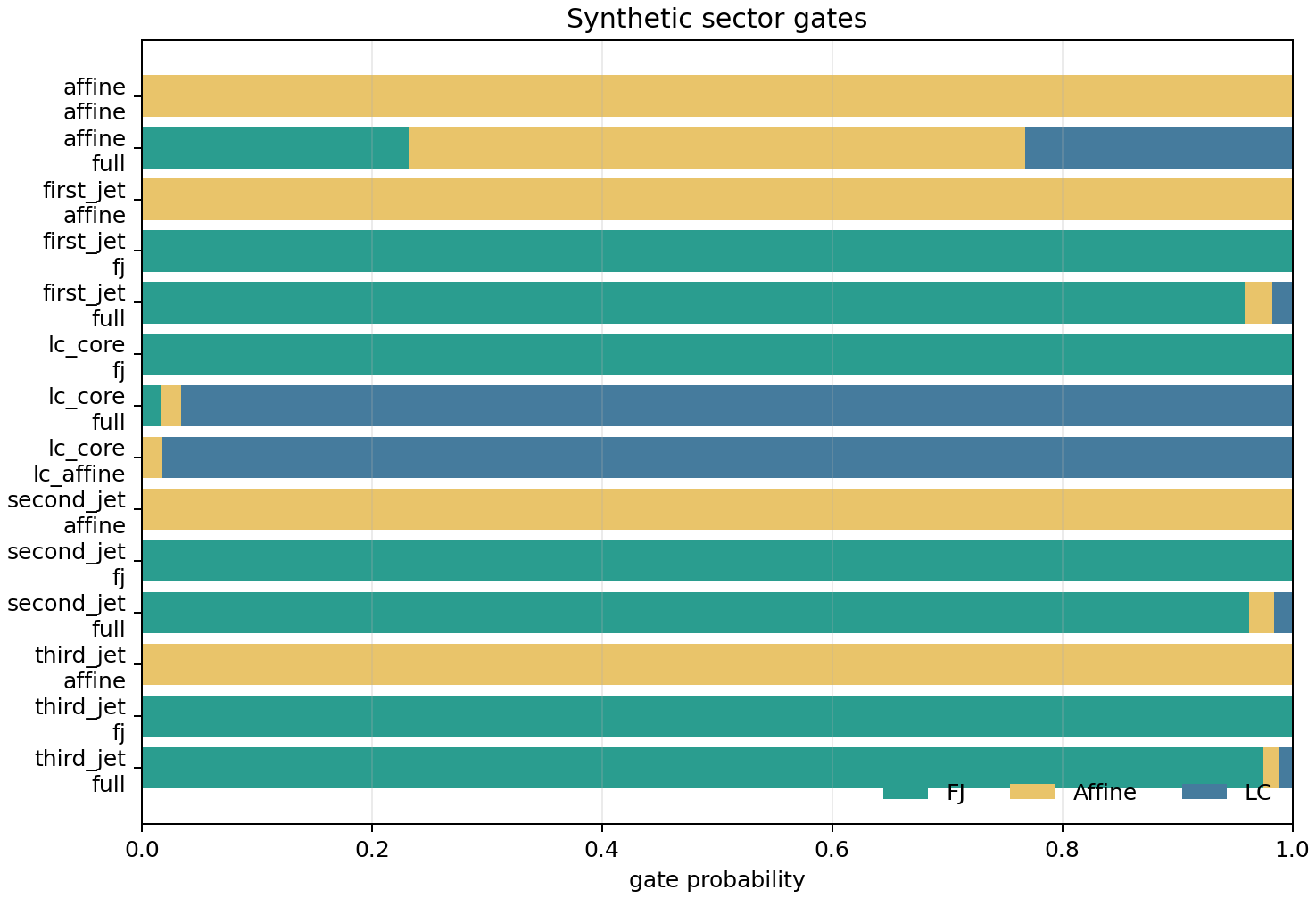}
\includegraphics[width=0.44\linewidth]{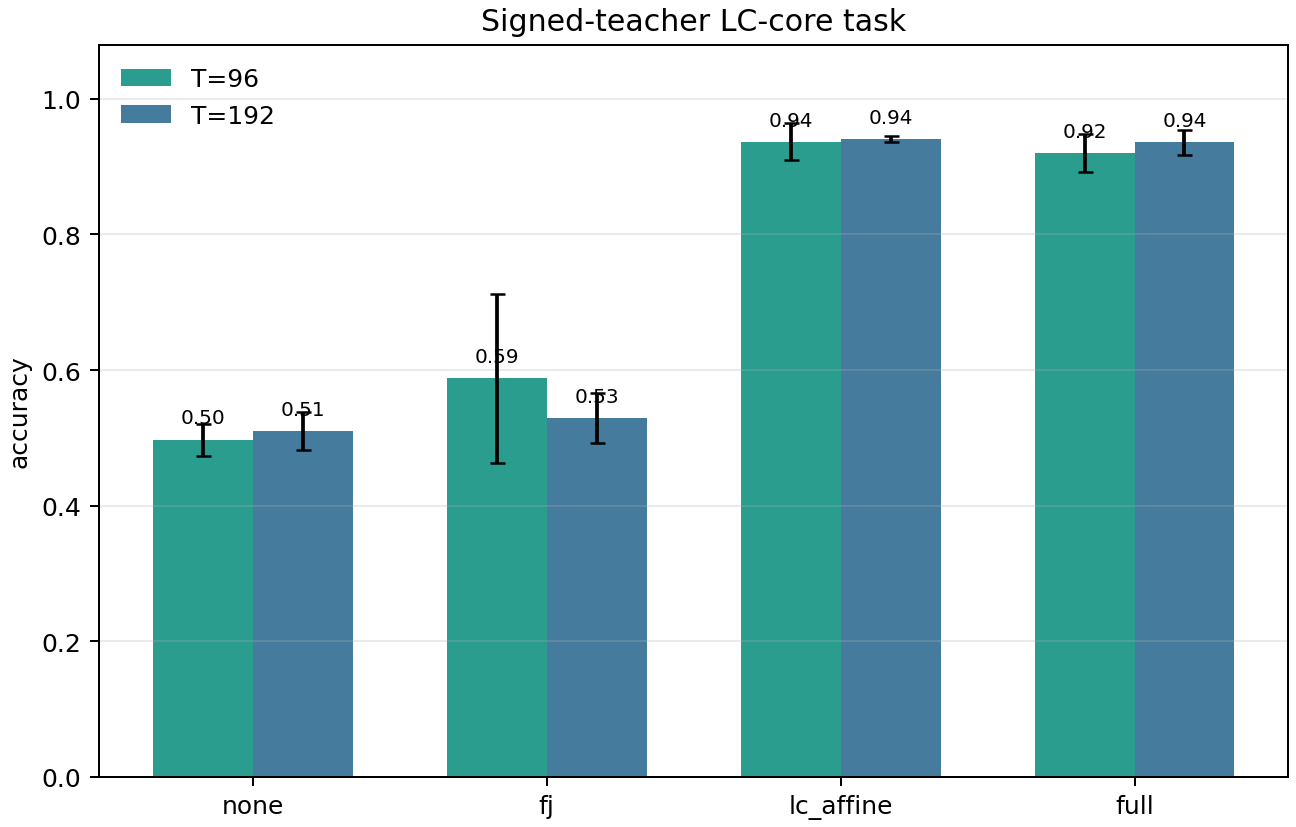}
\caption{Synthetic sequence bridge.  Top: signed jet teacher accuracy under
multi-length training.  Bottom left: learned sector gates.  Bottom right:
LC-core teacher accuracy.}
\label{fig:synthetic}
\end{figure}

\subsection{Natural-language boundary test}

In the byte-level language runs, the dominant behavior is affine/recency
selection.  We train small byte-level language models at short context and
evaluate them at long context, including 32768-token stress settings.
NTK-style RoPE scaling plus affine recency gives the lowest 32768-token loss on
Tiny Shakespeare, WikiText-2, and War and Peace.  High-order FJ/LC mass appears
mainly as a diagnostic correction; affine recency is the leading source of loss
improvement in this slice.

\subsection{Music-token transfer}

Music-token streams allocate differently from byte-level language.  On MAESTRO
MIDI controls/pedal streams and two MusicNet reference-MIDI selectors, LC-affine
variants remain competitive or best at 32768 tokens, and the learned high-order
mass is small but consistently nonzero.  This pattern is consistent with motif
returns, rhythm envelopes, phrase timing, and long-range repetition.  The exact
PJ-rotary baseline provides a feature-transform closure control: it fits near the
training scale but degrades sharply at 32768 tokens, separating representation
correctness from stable long-context usefulness.
Table~\ref{tab:natural-summary} reports the main 32768-token contrasts.

\begin{table}[H]
\centering
\caption{Natural task summaries at 32768 tokens.  Panel~(a) reports the
language allocation; Panel~(b) reports music-token allocation.  Entries are
mean \(\pm\) standard deviation over three model seeds; lower validation
cross-entropy is better.}
\label{tab:natural-summary}
\scriptsize
\begin{subtable}{\linewidth}
\centering
\caption{Language boundary.  Models are trained at context length 1024.  The
second-best column reports the next-best scaled-RoPE-plus-affine variant.}
\label{tab:language-boundary}
\setlength{\tabcolsep}{2pt}
\begin{tabular}{lrrlclc}
\toprule
Dataset & Train & Eval & Best & Loss & 2nd scaled-aff. & Loss \\
\midrule
Tiny Shakespeare & 1024 & 32768 & NTK+affine & \(2.182\pm0.123\) & YaRN+affine & \(2.412\pm0.037\) \\
WikiText-2 & 1024 & 32768 & NTK+affine & \(2.048\pm0.037\) & YaRN+affine & \(2.267\pm0.071\) \\
War and Peace & 1024 & 32768 & NTK+affine & \(1.876\pm0.133\) & YaRN+affine & \(2.106\pm0.162\) \\
\bottomrule
\end{tabular}
\end{subtable}

\vspace{0.65em}

\begin{subtable}{\linewidth}
\centering
\caption{Music-token allocation.  Models are trained at context length 512.
The high-order mass column reports the total high-order mass of the LC-affine
row.}
\label{tab:music-positive}
\setlength{\tabcolsep}{2pt}
\begin{tabularx}{\linewidth}{Xrrcccc}
\toprule
Dataset / selector & Train & Eval & LC-affine & Full PJ & NTK+affine & High-order mass \\
\midrule
MAESTRO controls & 512 & 32768 & \(0.921\pm0.037\) & \(0.929\pm0.118\) & \(1.518\pm0.162\) & \(0.038\pm0.009\) \\
MusicNet selector A & 512 & 32768 & \(0.958\pm0.110\) & \(1.392\pm0.411\) & \(1.629\pm0.420\) & \(0.042\pm0.009\) \\
MusicNet selector B & 512 & 32768 & \(0.884\pm0.207\) & \(0.932\pm0.232\) & \(2.060\pm0.650\) & \(0.042\pm0.007\) \\
\bottomrule
\end{tabularx}
\end{subtable}
\end{table}

\begin{figure}[t]
\centering
\includegraphics[width=0.48\linewidth]{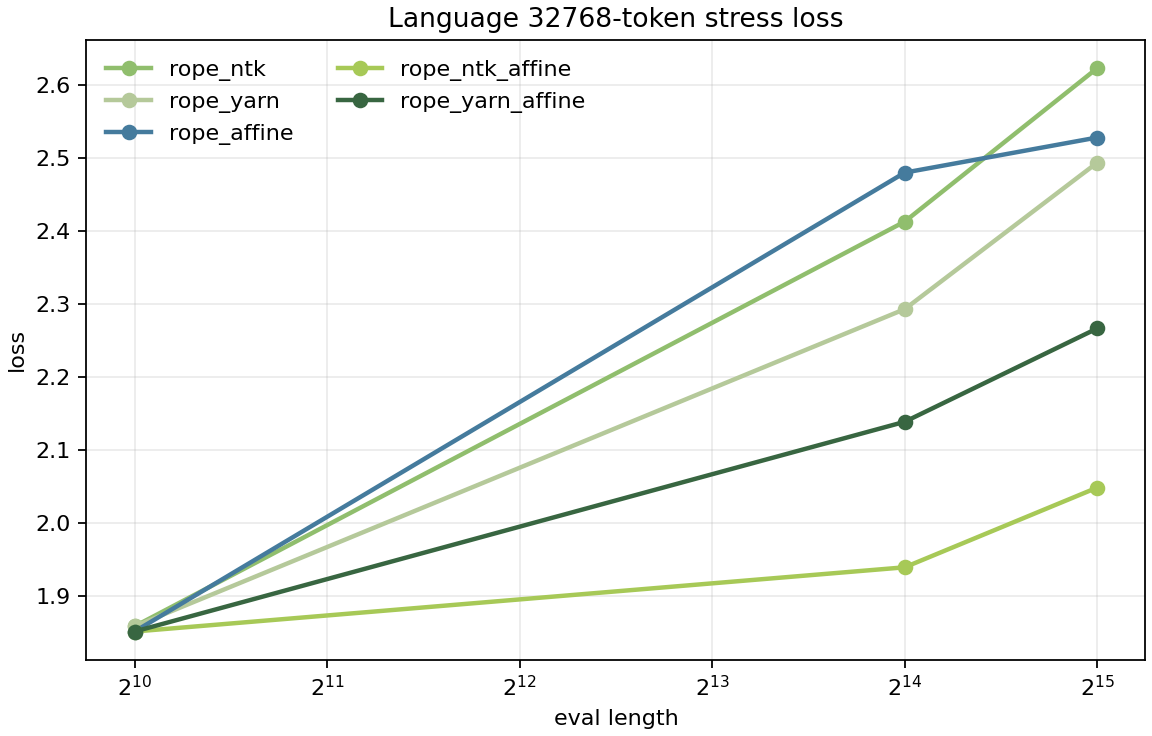}
\includegraphics[width=0.48\linewidth]{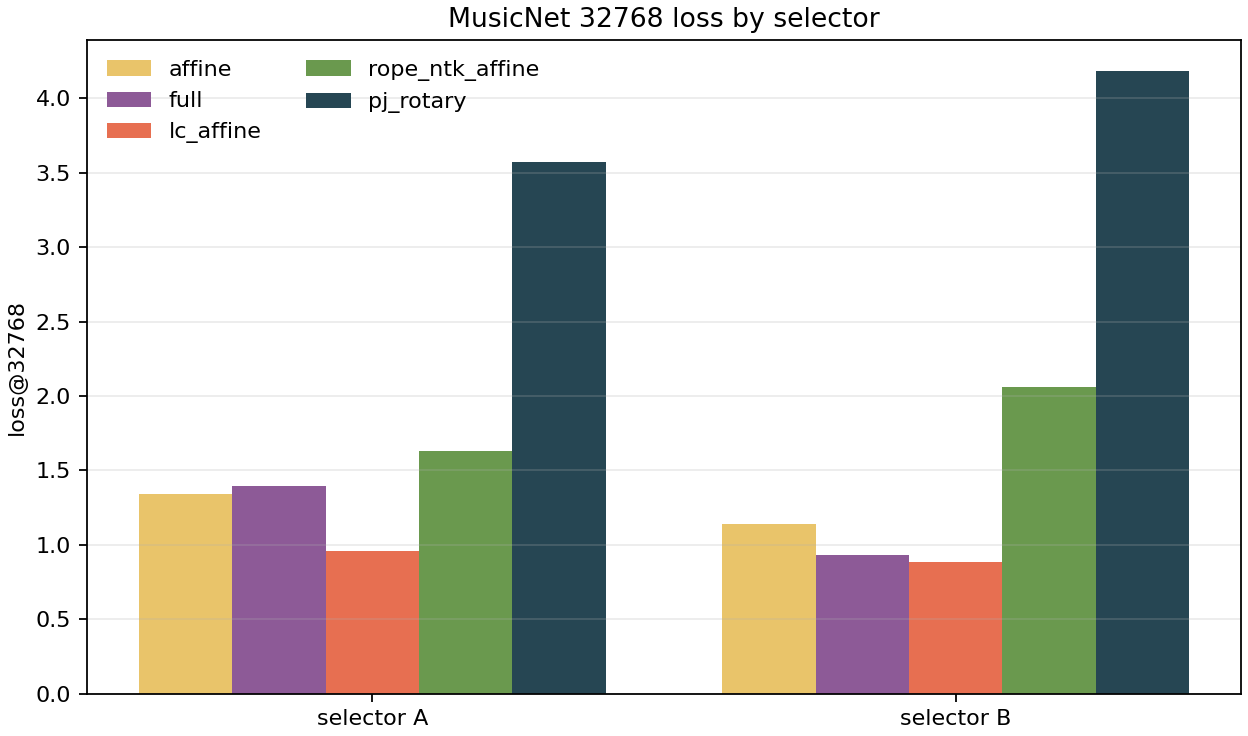}
\includegraphics[width=0.48\linewidth]{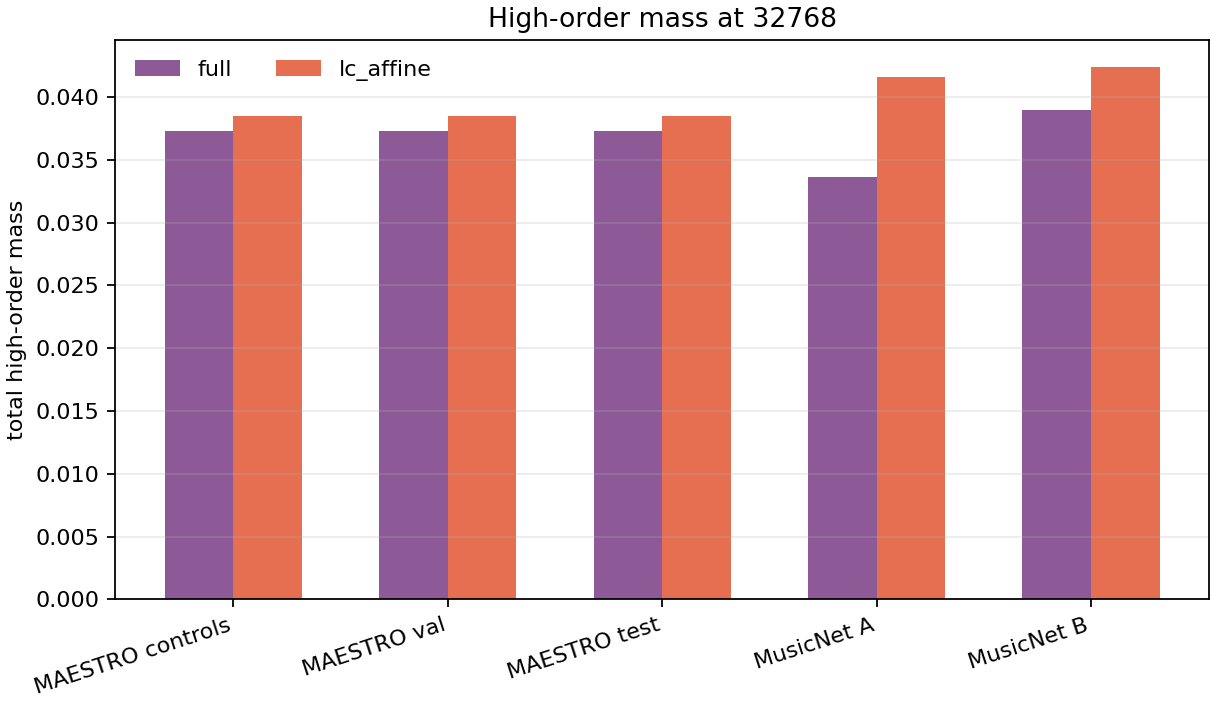}
\includegraphics[width=0.48\linewidth]{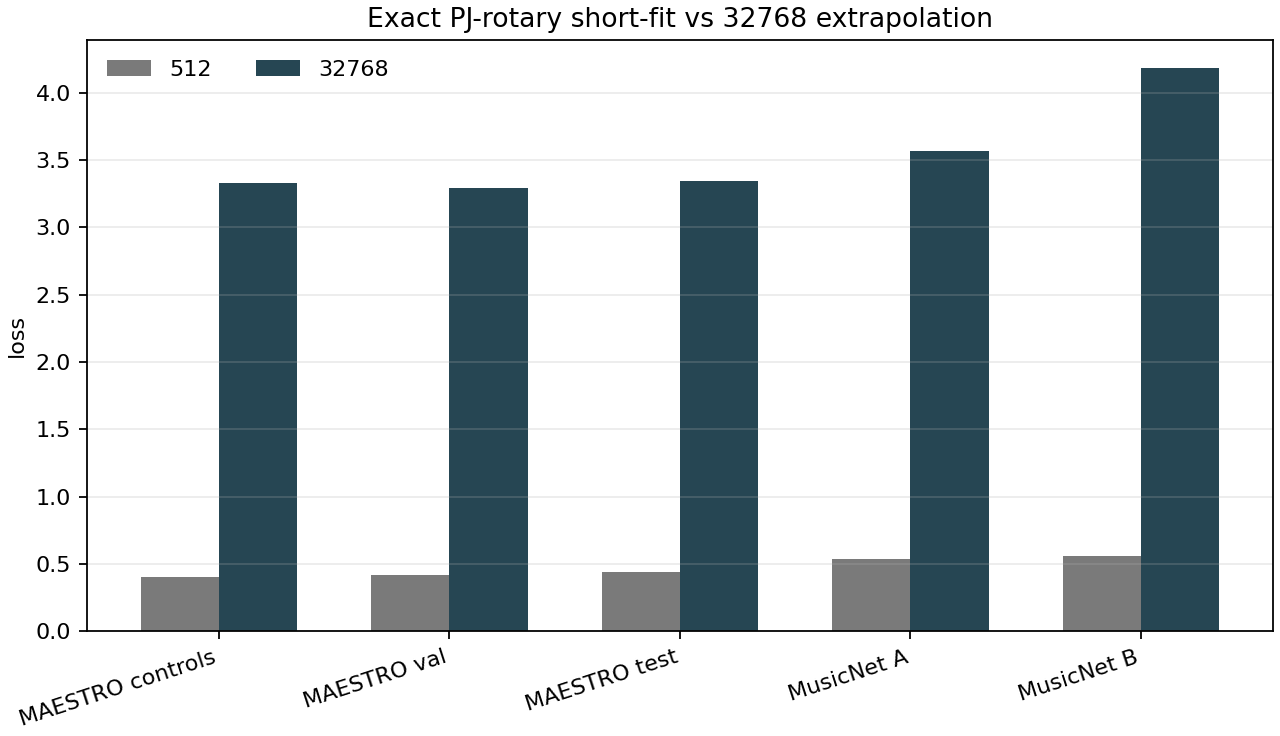}
\caption{Natural task contrast.  Language concentrates on recency/affine
behavior, while MusicNet/MAESTRO MIDI-token streams keep LC/affine variants
strong with measurable high-order corrections.  The exact PJ-rotary panel is a
feature-transform closure control: feature-transform correctness does not by
itself guarantee stable extrapolation.}
\label{fig:natural}
\end{figure}

\clearpage
\subsection{GRAPE special-case reruns}

GRAPE is the closest existing group-action framework to \pjrope{}.  We
therefore include three exact special-case controls: GRAPE-M/RoPE,
GRAPE-A/ALiBi, and GRAPE-M+A/RoPE+ALiBi.  These controls provide the closest
standard multiplicative-rotation and additive-recency reference point.

The fixed-projection comparison in Appendix~\ref{app:grape} isolates primitive
containment.  GRAPE-M+A covers separate phase and affine directions, while
PJ-FJ contains phase-modulated distance terms such as
\((d/L)\cos\omega d\) and \((d/L)^2\cos\omega d\).  The trainable reruns then
show small-model finite-budget behavior: GRAPE-M+A is a strong natural-task
control, PJ and LC-PJ variants are competitive in some rows, and the best
method depends on domain and training regime.  This separates primitive
containment from trainable optimization behavior.

\subsection{Light-cone stabilization}

The LC experiments test the stabilizer directly.  Raw high-order coordinates
can create extreme transformed-key norms and logit scales at 32768 tokens.  LC
variants replace raw distance with compactified phase and saturating amplitude,
so the relevant measurements are not only loss but also Q/K proxy, effective
support, phase span, cache/logit scale, quantization error, and retrieval
resolution.

The observed pattern matches the theoretical expectation.  LC variants bound
coordinate and cache scale, while the same compression reduces far-range
resolution, visible in int4 and hard-negative retrieval probes.  The LC
measurements show the expected exchange: coordinate and cache scale are
bounded, while far-range phase span and hard-negative retrieval resolution
decrease.  Table~\ref{tab:lc-summary} gives the core stability and resolution
metrics.

For clarity, the LC table uses the same stability proxy as Figure~\ref{fig:lc}.
``QK proxy'' is the final-lag query/key norm proxy from the feature-scale sweep,
and ``cache logit std'' is the standard deviation of the cached logit proxy at
the evaluation length.  Let \(W_{\mathrm{far}}\) denote the far evaluation
bucket and let \(\psi(d)\) be the phase coordinate used by a variant.  We define
\[
  \mathrm{phase\ ratio}
  =
  \frac{\max_{d\in W_{\mathrm{far}}}\psi(d)-\min_{d\in W_{\mathrm{far}}}\psi(d)}
       {\max_{d\in W_{\mathrm{far}}}d-\min_{d\in W_{\mathrm{far}}}d},
\]
normalized so that raw and scaled coordinates equal \(1\).  The far span is the
corresponding frequency-weighted phase span,
\[
  \max_{d\in W_{\mathrm{far}}}\omega\psi(d)
  -
  \min_{d\in W_{\mathrm{far}}}\omega\psi(d).
\]

\begin{table}[H]
\centering
\caption{LC stability and phase-resolution diagnostics at 32768 tokens.  The
wrong-scale control is numerically stable but loses usable phase span.}
\label{tab:lc-summary}
\scriptsize
\setlength{\tabcolsep}{3pt}
\begin{tabularx}{\linewidth}{lrrrrrrX}
\toprule
Variant & QK proxy & Cache logit std & Phase ratio & Far span & Int8 top1 & Int4 top1 & Reading \\
\midrule
raw & \(5.86{\times}10^{12}\) & \(1.78{\times}10^{24}\) & 1.000 & 1708.92 &
0.992 & 0.594 & Scale explodes; attention collapses despite high phase span. \\
scaled & \(2.24{\times}10^{2}\) & \(3.97{\times}10^{3}\) & 1.000 & 1708.92 &
1.000 & 0.891 & Reduces raw growth but leaves large cache/logit scale. \\
lc & \(2.00\) & \(5.16{\times}10^{-1}\) & 0.031 & 73.96 &
1.000 & 0.109 & Stabilizes scale; int4 retrieval exposes resolution cost. \\
lc\_wrong\_scale & \(2.00\) & \(1.21{\times}10^{-1}\) & \(3.05{\times}10^{-5}\) & 0.07 &
0.031 & 0.000 & Stable norm but collapsed phase span. \\
\bottomrule
\end{tabularx}
\end{table}

\clearpage
\begin{figure}[H]
\centering
\includegraphics[width=0.48\linewidth]{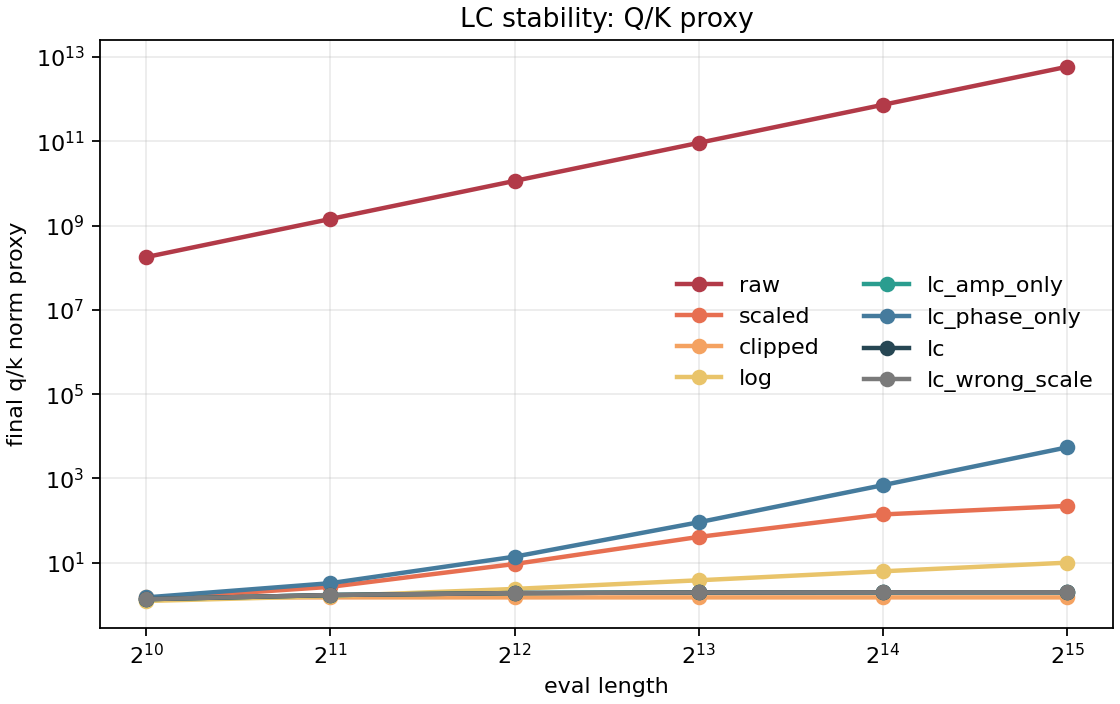}
\includegraphics[width=0.48\linewidth]{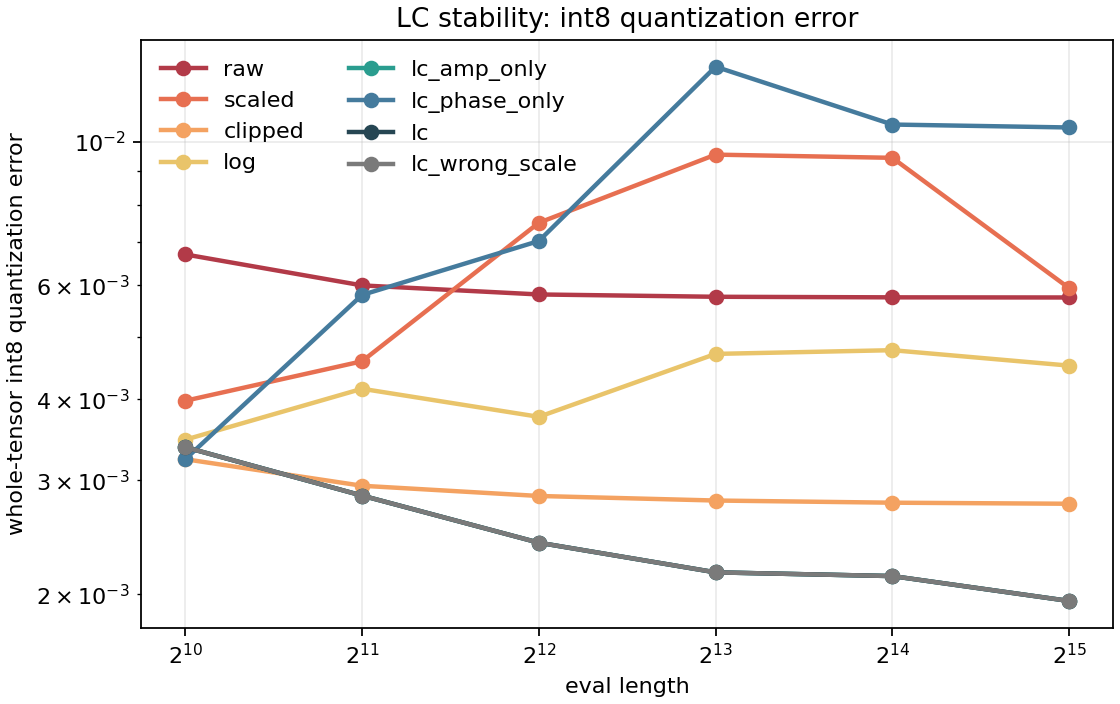}
\includegraphics[width=0.48\linewidth]{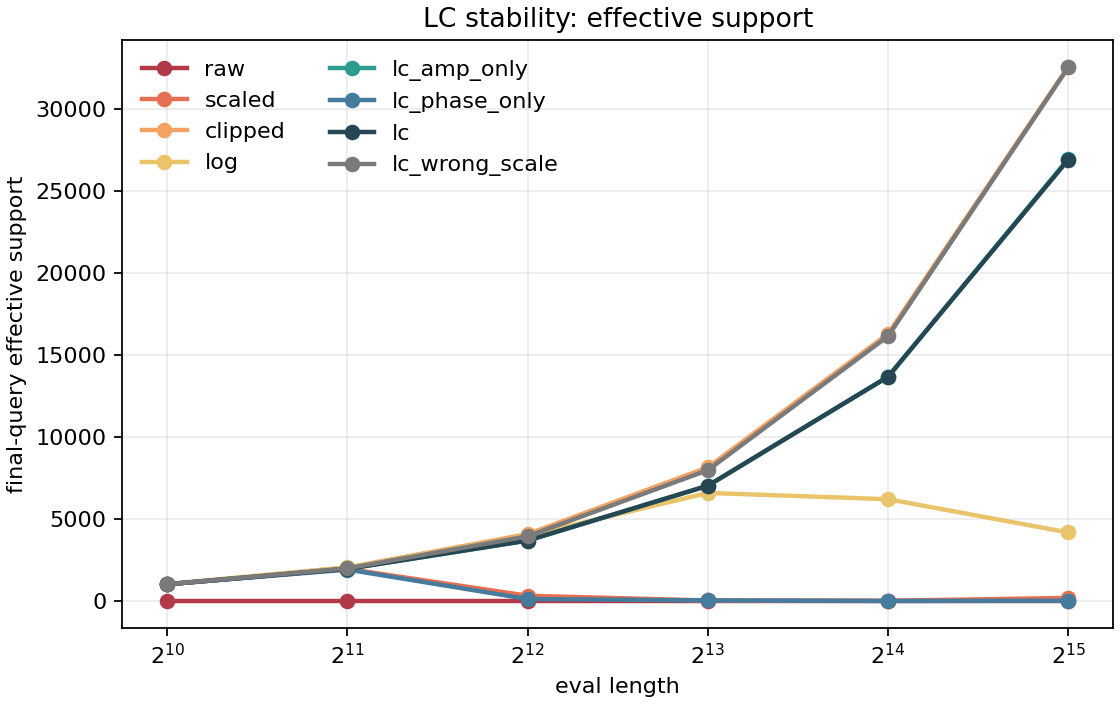}
\includegraphics[width=0.48\linewidth]{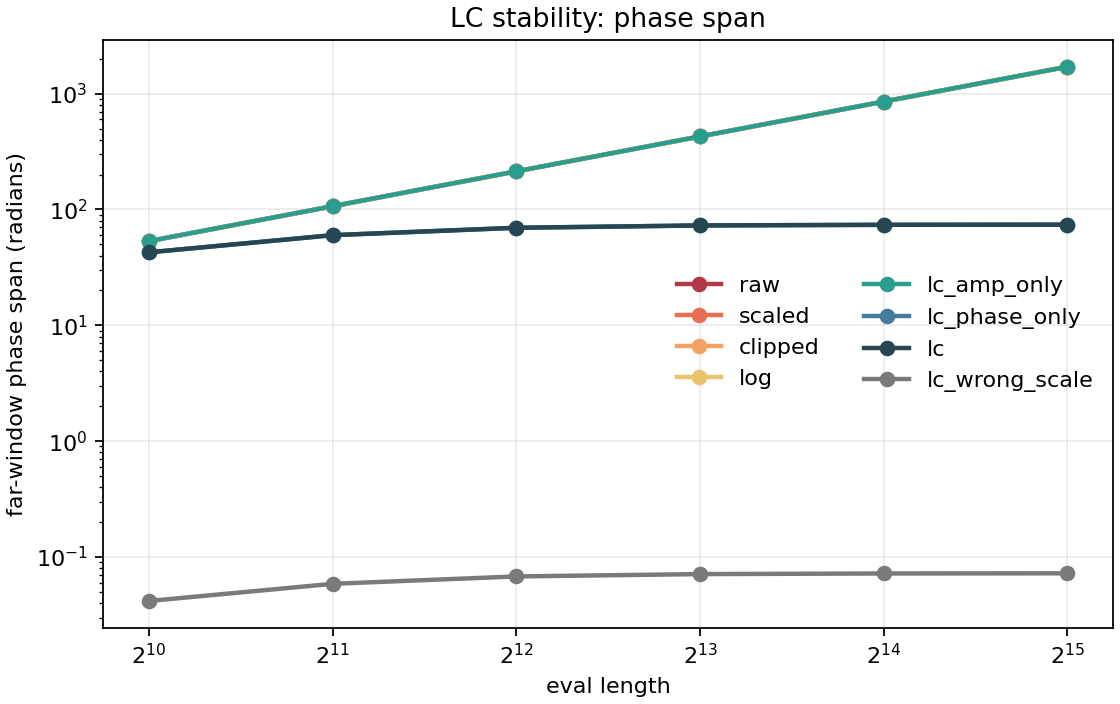}
\caption{LC stability tradeoff.  LC variants bound scale and cache pressure,
but the same compactification exposes a phase-resolution cost.}
\label{fig:lc}
\end{figure}

\paragraph{Code and reproducibility.}
We provide the minimal implementation and experiment scripts needed to rerun the
reported studies and regenerate figures and tables locally.  Generated tables,
figures, run logs, large checkpoints, and raw corpora are excluded.  The
reproducibility repository is available at
\url{https://github.com/ybzhang-nxu/Poincare_Rope}.

\section{Discussion and Scope}

The experiments give a sector-selection picture.  Controlled kernels and
adaptive probes recover the designed Fourier, finite-jet, affine, and LC
regions.  Synthetic sequence tasks show that these regions can be used inside
trainable attention when the positional branch carries the teacher signal.
Byte-level language runs concentrate on affine/recency behavior under the
small-model, short-context training regime used here.  Symbolic music-token
streams show stronger LC/affine behavior with small but measurable high-order
corrections.  LC diagnostics then explain the stabilization mechanism by
measuring the accompanying loss of far-range phase resolution.

\paragraph{Relation to common relative-position regimes.}
The observed allocations are consistent with the broader relative-position
literature.  In the byte-level language runs, the strongest behavior is
affine/recency selection, matching the empirical role of bias-centric and
scaled-RoPE methods in long-context language modeling.  The fixed-kernel and
controlled synthetic tasks occupy the Fourier--jet region, where
distance-modulated phase terms are present by construction.  The music-token
runs give an intermediate allocation: LC/affine variants remain strong, while
high-order mass is small but measurable.  The LC diagnostics isolate the
corresponding implementation tradeoff: compactification controls high-order
scale and cache pressure with a reduction in far-range phase resolution.

The strong NTK+affine language result can be read through the
difference-module viewpoint.  NTK-aware RoPE and adjusted-base-frequency
variants change the RoPE frequency grid
\cite{Bloc97NTK2023,Xiong2023LongContextScaling,Peng2024,Liu2024ScalingRoPE},
hence moving the simple Fourier roots \(z_\ell=e^{i\omega_\ell}\) along a
spectral flow.  The tangent directions of this flow are first Fourier jets
\(d z_\ell^d\), while higher Taylor directions generate higher jets
\(d^r z_\ell^d\).  The affine branch supplies the analogous zero-frequency jet
at \(z=1\).  Appendix~\ref{app:ntk_diff_module} makes this relation explicit.

\begin{table}[t]
\centering
\caption{Relative-position families and \pjrope{} sectors.  The table
positions \pjrope{} as a primitive-family formulation alongside existing
relative-position mechanisms.}
\label{tab:relative-families}
\small
\setlength{\tabcolsep}{3pt}
\begin{tabularx}{\linewidth}{l>{\raggedright\arraybackslash}X>{\raggedright\arraybackslash}X>{\raggedright\arraybackslash}X}
\toprule
Existing family & Typical examples & Primitive & \pjrope{} relation \\
\midrule
Local / bucketed RPE & Shaw, Transformer-XL, T5-style bias~\cite{Shaw2018,Dai2019,Raffel2020} & learned offset or bucket & reference family \\
Bias-centric recency & ALiBi, T5 relative bias~\cite{Press2022,Raffel2020} & scalar function of \(d\) & affine sector \\
Kernelized / functional bias & KERPLE, FIRE, MEP~\cite{Chi2022,Li2024,Gao2024MEP} & smooth scalar kernels & adjacent scalar-kernel family \\
Rotary phase & RoPE, XPos, PI, YaRN, LongRoPE~\cite{Su2021,Sun2023,Chen2023,Peng2024,Ding2024} & Fourier phase & Fourier sector \\
Non-semisimple phase & Jordan-RoPE~\cite{JordanRoPE2026} & \(d^r e^{i\omega d}\) & finite-jet sector \\
Hyperbolic / log distance & HyPE and nonlinear distance biases~\cite{Angelotti2023HyPE} & nonlinear distance coordinate & related to LC coordinate compression \\
Efficient RPB & FastRPB~\cite{Zubkov2022FastRPB} & Toeplitz / FFT acceleration & future implementation path for PJ-bias \\
\bottomrule
\end{tabularx}
\end{table}

Since scalar PJ-bias kernels are lag-indexed kernels, FFT-style relative-bias
acceleration such as FastRPB is a natural implementation direction for scaling
the bias path~\cite{Zubkov2022FastRPB}.

The dataset interpretation is also specific.  The MusicNet result is a
reference-MIDI token result, not an audio-label MusicNet result.  The evidence
supports a music-token allocation pattern for structured symbolic streams,
and does not address audio transcription or acoustic classification.

\paragraph{Feature correctness versus long-context stability.}
The exact PJ-rotary implementation is a controlled feature-transform contrast.
It verifies that the feature-transform path can realize
RoPE/Jordan-RoPE-style relative actions, but Figure~\ref{fig:natural} shows a
separate stability requirement at 32768 tokens: loss rises sharply when the same
high-order feature transform is evaluated far beyond the 512-token training
scale.  This contrast separates representation closure from long-context
numerical stability.  The scalar LC path serves as the stabilized chart for
long-context high-order coordinates.

\paragraph{Scale and generality.}
The trainable evidence uses two-layer, 96-dimensional small Transformers and
large extrapolation ratios: language rows train at 1024 bytes and evaluate up
to 32768 bytes, while music-token rows train at 512 bytes and evaluate up to
32768 bytes.  These settings are deliberately stress tests of positional
mechanisms.  They do not establish that the same allocation will hold unchanged
in billion-parameter models, longer training schedules, or subword-tokenized
language models.  Main tables now report seed standard deviations where
available, but several conclusions still rely on small numbers of seeds and
single long-context evaluation batches.  Larger-scale validation is therefore
needed before treating the observed allocations as deployment rules.

\section{Conclusion}

\pjrope{} reframes relative positional representation as a learnable
Fourier--Jet--Affine space over the lag axis.  The difference-module view
explains why these mechanisms belong to one structure: RoPE is a simple
Fourier root of the lag shift, Jordan-RoPE and high-order PJ coordinates are
repeated-root Fourier jets, and ALiBi-like recency is the repeated unit-root
affine direction.  NTK-aware RoPE scaling fits this same picture as a spectral
flow of simple Fourier roots, whose tangent directions are first Fourier jets.
Thus \pjrope{} makes explicit a finite family of jet directions that are
implicit in local spectral deformations of RoPE.

Empirically, the paper uses this space as a diagnostic rather than as a single
universal replacement for existing positional encodings.  Controlled kernels
and synthetic teachers recover their intended sectors; small byte-level
language models prefer NTK-aware scaling plus affine recency; symbolic
music-token streams keep LC/affine variants strong with small but measurable
high-order corrections.  LC/rapidity coordinates stabilize the high-order
regime, but the same compactification reduces far-range phase resolution.
Stable long-context use therefore depends on a task-dependent allocation
between affine recency, Fourier--jet structure, and LC compactification.

\clearpage
\appendix
\section{Detailed Comparison with GRAPE}
\label{app:grape}

\subsection{Scope of the comparison}

GRAPE provides the closest existing group-action unification framework for
RoPE-like multiplicative rotations and ALiBi-like additive biases
\cite{Zhang2026GRAPE}.  The comparison locates the overlap and the difference:
GRAPE emphasizes exact group-action laws and cacheable relative actions, while
\pjrope{} emphasizes the non-semisimple Fourier--jet sector, adaptive sector
diagnostics, and LC/rapidity stabilization.

The baselines in this appendix are exact special cases of the GRAPE framework,
not implementations of full learned GRAPE-M, GRAPE-A, or GRAPE-AP.  We compare
GRAPE-M special case / RoPE, GRAPE-A special case / ALiBi, and GRAPE-M+A
special case / RoPE+ALiBi as controlled primitive bases.  The comparison is
about primitive function spaces, sector selection, and high-order stability.

\subsection{What GRAPE covers}

GRAPE formulates positional encoding as group actions with exact relative laws.
Multiplicative GRAPE represents positions through norm-preserving rotations
\(G(n)=\exp(n\omega L)\in SO(d)\), with rank-two skew generators.  RoPE is
recovered when the rotation planes are canonical coordinate pairs with a chosen
frequency spectrum.  Additive GRAPE realizes additive logit biases through
low-rank unipotent lifts in a larger linear group, recovering ALiBi-like
additive slopes as exact special cases.  These properties are important:
exact relative composition, norm-preserving multiplicative actions, additive
unipotent lifts, and streaming cacheability are central strengths of GRAPE.

\pjrope{} is complementary on a different axis.  It asks what changes when the
rotary Fourier character itself is made non-semisimple, producing finite jets
\(d^r e^{i\omega d}\), and how those high-order coordinates can be diagnosed
and stabilized at long context.

\subsection{Algebraic distinction: where the nilpotent lives}

In the additive GRAPE special case, the nilpotent lives in a unipotent lift that
generates an additive bias.  A schematic form is
\[
  G_{\mathrm{add}}(n)=I+n\omega A,\qquad A^2=0.
\]
This is the correct algebraic home for ALiBi-like additive slopes.

In the Fourier--jet sector of \pjrope{}, the nilpotent instead lives inside the
complex rotary eigenvalue block:
\[
  J=(-\gamma+i\omega)I+\eta N,\qquad N^m=0.
\]
Exponentiating gives
\[
  e^{dJ}
  =
  e^{(-\gamma+i\omega)d}
  \sum_{r=0}^{m-1}\frac{(\eta d)^r}{r!}N^r.
\]
Thus the primitive modes include
\[
  e^{i\omega d},\quad d e^{i\omega d},\quad d^2 e^{i\omega d},\ldots.
\]
The key difference is that GRAPE-M+A special cases combine phase and distance
as separate primitives, whereas PJ-FJ makes distance-modulated phase a
primitive.

\begin{lemma}[Direct-sum phase/affine features do not contain Fourier jets]
\label{lem:grape-direct-sum}
For \(\omega\not\equiv 0,\pi\), the function \(d e^{i\omega d}\) is not
contained in the finite span
\[
  \operatorname{span}\{1,d,e^{i\omega d},e^{-i\omega d}\}
\]
over infinitely many integer lags.
\end{lemma}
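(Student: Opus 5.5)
We argue by contradiction, using the lag-shift annihilator viewpoint of Appendix~\ref{app:difference-modules}. Suppose there are constants \(a,b,c_+,c_-\in\mathbb{C}\) with
\[
  d\,e^{i\omega d}=a+bd+c_+e^{i\omega d}+c_-e^{-i\omega d}
\]
for infinitely many integers \(d\). Set \(z=e^{i\omega}\). The hypothesis \(\omega\not\equiv 0,\pi\) gives \(z\neq 1\) and \(z\neq z^{-1}\). Hence the three values \(1\), \(z\), \(z^{-1}\) are pairwise distinct points on the unit circle.

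\emph{Step 1: upgrade to all lags.} The relation holds only on an infinite set \(S\subseteq\mathbb{Z}\), so we first reduce to an exact identity on all of \(\mathbb{Z}\).

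Consider \(F(d)=d z^d-a-bd-c_+z^d-c_-z^{-d}\). This is an exponential polynomial, i.e.\ a solution of \(P(E)F=0\) with
\[
  P(E)=(E-1)^2(E-z)^2(E-z^{-1}).
\]
The cleanest route avoids any zero-counting argument for exponential polynomials on infinite sets, which can fail in general (for example, \(\sin(\pi d/2)\) vanishes on all even \(d\)). Instead we split by sign of the lags.

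\emph{Step 2: growth comparison.} Since \(S\) is infinite, it contains infinitely many positive or infinitely many negative lags; by symmetry, take \(d\to+\infty\) along \(S\). Divide the relation by \(d\):
\[
  z^d = \frac{a}{d}+b+\frac{c_+z^d+c_-z^{-d}}{d}.
\]
Every term on the right except \(b\) tends to \(0\). Therefore \(z^d\to b\) along \(S\).

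Now repeat the argument after first subtracting the limit. Write \(d(z^d-b)=a+c_+z^d+c_-z^{-d}\). The right side is bounded. So \(|z^d-b|\le C/d\) along \(S\), meaning \(z^d\) converges to \(b\) at rate \(O(1/d)\), with \(|b|=1\). This alone is not yet contradictory.

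\emph{Step 3: the main obstacle, and how to handle it.} The difficulty is exactly this approximation issue. If \(\omega/2\pi\) is irrational, the set \(\{d: |z^d-b|<C/d\}\) can still be infinite, by Dirichlet-type approximation. So the statement must be read as asserting that the relation holds on all lags, or at least on a set containing arbitrarily long runs of consecutive lags. Under that reading we use the shift operator.

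If the identity holds on \(d,d+1,\dots,d+5\) for some \(d\), apply \((E-1)^2(E-z^{-1})\). This operator annihilates \(1\), \(d\), and \(z^{-d}\). It maps \(z^d\) to \(\kappa z^d\) with \(\kappa=(z-1)^2(z-z^{-1})\neq 0\). It maps \(dz^d\) to \(\kappa dz^d+\lambda z^d\) for some constant \(\lambda\).

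Both sides must then agree on consecutive lags:
\[
  \kappa d z^d+\lambda z^d=c_+\kappa z^d .
\]
Dividing by \(z^d\) leaves the affine identity \(\kappa d+\lambda-c_+\kappa=0\). An affine function of \(d\) that vanishes at two consecutive integers is identically zero, which forces \(\kappa=0\). That is a contradiction.

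I would state the lemma with this precise hypothesis: the relation holds on infinitely many lags in a set containing two consecutive lags after shifting, for instance all \(d\ge d_0\). This is the natural meaning of membership in the span as lag functions. The key nondegeneracy input throughout is \(\kappa\neq0\), which is exactly \(\omega\not\equiv0,\pi\).
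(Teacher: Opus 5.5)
Your Step 3 annihilator argument is correct and is essentially the paper's proof made explicit. The paper cites linear independence of exponential-polynomial sequences with distinct roots; applying $(E-1)^2(E-z^{-1})$ to isolate the nonvanishing factor $\kappa=(z-1)^2(z-z^{-1})$ is exactly that independence in this case, and five consecutive lags already suffice, so Steps 1--2 can be dropped.

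Your insistence on consecutive lags is also justified, but not for the Dirichlet reason you give. The literal ``some infinite set'' reading genuinely fails for rational frequencies: for $\omega=2\pi/3$ one has $de^{i\omega d}=d$ for all $d\in 3\mathbb{Z}$. For irrational $\omega/2\pi$ the Dirichlet observation only stalls your growth argument; the literal statement actually holds there by the Skolem--Mahler--Lech theorem, because the recurrence is non-degenerate.
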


\begin{proof}[Proof sketch]
This follows from the linear independence of exponential-polynomial sequences
with distinct characteristic roots.  The term \(d e^{i\omega d}\) corresponds
to a repeated root at \(e^{i\omega}\), while the direct-sum phase/affine basis
only contains simple roots at \(e^{\pm i\omega}\) and the affine root at \(1\).
This is the special case of the repeated-root description in
Appendix~\ref{app:difference-modules}.
\end{proof}

\subsection{Conceptual comparison table}

\begin{table}[H]
\centering
\caption{Conceptual comparison between GRAPE and \pjrope{}.  GRAPE emphasizes
exact group-action unification and cacheable relative laws; \pjrope{} emphasizes
non-semisimple Fourier jets, adaptive sector diagnostics, and LC stabilization.}
\label{tab:grape-conceptual}
\small
\begin{tabularx}{\linewidth}{l>{\raggedright\arraybackslash}X>{\raggedright\arraybackslash}X}
\toprule
Aspect & GRAPE & \pjrope{} \\
\midrule
Main object & group-action positional encoding & relative-position module space \\
Homogeneous side & \(SO(d)\) rotations & Fourier characters and finite jets \\
RoPE recovery & canonical rank-two skew rotations & order-zero Fourier sector \\
Nilpotent structure & unipotent lift for additive bias & defective complex Fourier block \\
Primitive modes & rotations and additive slopes & \(d^r e^{i\omega d}\) \\
ALiBi & exact unipotent special case & affine recency sector \\
Exact relative law & central design criterion & exact for PJ-rotary, scalar for PJ-bias \\
Stability emphasis & norm-preserving rotations / streaming laws & LC compactification of high-order jets \\
Empirical emphasis & deployable PE framework & sector diagnostics and task-dependent selection \\
\bottomrule
\end{tabularx}
\end{table}

\begin{table}[H]
\centering
\caption{Exactness and primitive-mode sanity table.  GRAPE special-case rows cover exact group-action laws; PJ rows identify the Fourier-jet and LC axes used in this paper.}
\label{tab:grape-exactness}
\small
\begin{tabularx}{\linewidth}{>{\raggedright\arraybackslash}Xccccc}
\toprule
Method & Exact relative law & Norm preserving & Additive recency & \(d^r e^{i\omega d}\) & LC bounded \\
\midrule
GRAPE-M special case & yes & yes & no & no & no \\
GRAPE-A special case & yes & n/a & yes & no & no \\
GRAPE-M+A special case & yes, separate & partial & yes & no & no \\
PJ-rotary exact & yes & no generally & no & yes & no \\
PJ-bias & scalar kernel & n/a & yes & yes & optional \\
LC-PJ bias & scalar kernel & n/a & yes & compactified jets & yes \\
\bottomrule
\end{tabularx}
\end{table}

\subsection{Controlled primitive-basis evidence}

The fixed projection experiment makes the algebraic distinction measurable.  We
fit each target \(y(d)\) by least squares over a fixed basis \(B(d)c\).  All
rows use the same frequency grid.  PJ-FJ variants add jet orders at that
frequency; the experiment is a primitive-containment probe.

The important targets are \(\cos\omega d\), \(-d/L\),
\((d/L)\cos\omega d\), and \((d/L)^2\cos\omega d\).  The direct-sum
GRAPE-M+A special case covers phase and affine targets.  Phase-modulated
distance appears as an explicit primitive in PJ-FJ order one and two.

\begin{table}[H]
\centering
\caption{GRAPE special-case and PJ primitive-basis projections.  All rows use the same frequency grid; PJ-FJ variants add jet orders at that frequency.  The table reports primitive containment through fixed projection.  Values are \(R^2\) at the longest evaluation length; em dashes mark failed extrapolations with \(R^2<-10\).}
\label{tab:grape-fixed-projection}
\small
\setlength{\tabcolsep}{3pt}
\begin{tabular}{lrrrrrr}
\toprule
Target & GRAPE-M/RoPE & GRAPE-A/ALiBi & GRAPE-M+A & PJ-FJ R1 & PJ-FJ R2 & PJ-LC \\
\midrule
\(\cos\omega d\) & 1.000 & -0.022 & 1.000 & 1.000 & 1.000 & \textemdash \\
\(-d/L\) & -2.815 & 1.000 & 1.000 & -2.815 & \textemdash & -2.817 \\
\((d/L)\cos\omega d\) & 0.046 & -0.000 & 0.046 & 1.000 & 1.000 & -0.044 \\
\((d/L)^2\cos\omega d\) & 0.001 & 0.000 & 0.001 & 0.076 & 1.000 & -0.000 \\
LC core & -0.046 & -0.051 & -0.054 & \textemdash & \textemdash & 1.000 \\
\bottomrule
\end{tabular}
\end{table}

\subsection{Cross-task stress reruns with GRAPE special-case controls}

Table~\ref{tab:grape-task-reruns} reports GRAPE special-case reruns under the
same small trainable setting.  Natural-task rows use the 8192-token maximum
length from this appendix sweep; the main natural-task table uses the
32768-token budget.  Table~\ref{tab:grape-fixed-projection} tests primitive
containment by projection; these rows test finite-budget optimization and
extrapolation.  In the signed first-jet row, GRAPE-M/RoPE outperforms PJ
variants at 384 tokens, although it does not contain the Fourier-jet primitive.
This indicates that a phase-only surrogate can correlate with the thresholded
teacher labels over the sampled window, while the learned jet branch is less
stable under this extrapolation setting.  Thus
Table~\ref{tab:grape-fixed-projection} and
Table~\ref{tab:grape-task-reruns} answer different questions: primitive
containment versus finite-budget trainability.

\begin{table}[H]
\centering
\caption{Cross-task stress reruns with GRAPE special-case controls.  Rows use restricted GRAPE-M/A exact special-case controls.  Synthetic rows report accuracy over three seeds; language and music rows report validation cross-entropy over two seeds at the maximum length available in this appendix sweep, 8192 tokens for natural-task rows.  The table reports finite-budget trainability and extrapolation; Table~\ref{tab:grape-fixed-projection} reports primitive containment.  The PJ reference column is PJ-FJ for synthetic rows and LC-PJ+A for natural rows.}
\label{tab:grape-task-reruns}
\scriptsize
\begin{tabularx}{\linewidth}{lXrrrrrr}
\toprule
Task & Target / run & Eval & GRAPE-M & GRAPE-A & GRAPE-M+A & PJ full & PJ ref \\
\midrule
Language & Tiny Shakespeare & 8192 & 3.614 & 2.490 & 2.498 & 2.491 & 2.491 \\
Language & WikiText-2 & 8192 & 3.070 & 2.417 & 2.559 & 2.406 & 2.402 \\
Music & MAESTRO controls & 8192 & 2.335 & 1.163 & 1.138 & 1.172 & 1.144 \\
Music & Motif-rich music & 8192 & 2.200 & 1.108 & 0.479 & 1.248 & 1.147 \\
Music & MusicNet selector A & 8192 & 2.321 & 1.350 & 1.105 & 1.334 & 1.380 \\
Synthetic & affine & 384 & 0.944 & 0.885 & 0.899 & 0.896 & 0.934 \\
Synthetic & first jet & 384 & 0.647 & 0.501 & 0.591 & 0.447 & 0.465 \\
Synthetic & phase & 384 & 0.760 & 0.495 & 0.753 & 0.786 & 0.729 \\
\bottomrule
\end{tabularx}
\end{table}

\subsection{Summary of the comparison}

Overall, GRAPE and \pjrope{} overlap on the unification axis but emphasize
different coordinates.  GRAPE emphasizes exact group-action laws, norm-preserving
multiplicative rotations, additive unipotent lifts, and cacheable relative
actions.  \pjrope{} emphasizes non-semisimple Fourier jets, adaptive sector
diagnostics, and LC stabilization of high-order behavior.

\section{Constant-coefficient Difference Modules and Fourier Jets}
\label{app:difference-modules}

This appendix makes explicit the difference-module viewpoint used in the main
text.  It is not a new experimental claim.  Its purpose is to put RoPE,
Jordan-RoPE, ALiBi, and the Fourier--jet part of \pjrope{} into the standard
language of shift operators and repeated characteristic roots.

\subsection{Lag-shift motivation for difference modules}
\label{app:diff_lag_shift_motivation}

The difference-module viewpoint starts from a simple property of relative
attention.  A relative attention kernel is a response function on the lag axis.
If a query at position \(i\) attends to a key at position \(j\), the relative
variable is
\[
  d=i-j,\qquad d\ge 0.
\]
After passing from absolute positions to relative positions, the natural
operation is to move one step farther along the lag axis:
\[
  d\mapsto d+1.
\]
This operation is the lag-shift operator
\[
  (Ef)(d)=f(d+1).
\]
Thus \(E\) is not merely an auxiliary algebraic symbol.  It records how a
relative-position response changes when the key is moved one token farther
from the query.

A finite structured positional mechanism should describe this one-step lag
evolution with finitely many internal coordinates.  Equivalently, the shifted
responses
\[
  K,\ EK,\ E^2K,\ldots
\]
should span a finite-dimensional space.  This means that there exists a
nonzero polynomial \(P\) such that
\[
  P(E)K=0.
\]
This is the constant-coefficient difference-module condition.

Equivalently, \(K\) is a finite exponential-polynomial lag response.  The
transfer-operator form of the same statement is given in
Section~\ref{app:diff_finite_representation}.

The spectral type of \(E\) therefore classifies relative-position primitives.
A simple unit-modulus root
\[
  z=e^{i\omega}
\]
gives the RoPE/Fourier character
\[
  K(d)=z^d=e^{i\omega d},
\]
with \((E-z)K=0\), so each one-step increase in lag advances the phase by a
fixed angle.  A root
inside the unit disk gives a damped oscillatory mode and models long-distance
correlation decay.  A repeated nonzero root gives polynomially modulated
Fourier modes
\[
  d^r z^d,
\]
which are the Fourier jets used by Jordan-RoPE and \pjrope{}.  Finally, the
repeated unit root \(z=1\) gives
\[
  1,\ d,
\]
which is the affine recency direction used by ALiBi-like biases.
Section~\ref{app:diff_root_stability} separates out the corresponding
stability and correlation-decay consequences of root location.

This viewpoint also clarifies what is relaxed relative to ordinary RoPE.  RoPE
is the unitary, semisimple, cyclic case of lag-shift dynamics: increasing the
lag only rotates phase and does not change amplitude.  Causal long-context
attention is not a periodic circle; it lives on a one-sided lag axis and often
requires recency, decay, and non-semisimple corrections.  Difference modules
are therefore a natural non-unitary and non-semisimple extension of the RoPE
Fourier picture.

\subsection{Repeated roots and finite jets}

The natural polynomial basis for repeated roots of the shift operator is the
binomial basis
\[
  \phi_{z,r}(d)=\binom{d}{r}z^d,
  \qquad r\ge 0.
\]
We set \(\phi_{z,-1}=0\).

\begin{proposition}[Repeated roots generate finite jets]
\label{prop:difference-repeated-roots}
Let \(E\) be the shift operator and define
\[
  \phi_{z,r}(d)=\binom{d}{r}z^d.
\]
Then
\[
  (E-z)\phi_{z,r}=z\phi_{z,r-1},
\]
and therefore
\[
  \phi_{z,r}\in \ker(E-z)^{r+1}.
\]
Thus a root \(z\) of multiplicity \(m\) generates the modes
\[
  z^d,\binom d1 z^d,\ldots,\binom d{m-1}z^d.
\]
\end{proposition}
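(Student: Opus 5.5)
The plan is a direct computation with Pascal's rule, followed by induction on \(r\). Domain: sequences on the integer lag axis \(d\ge 0\), with \(\binom{d}{r}\) read as the polynomial \(d(d-1)\cdots(d-r+1)/r!\). It vanishes for \(0\le d<r\), so no boundary issues arise.

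First I would verify the key identity. By definition,
\[
(E\phi_{z,r})(d)=\binom{d+1}{r}z^{d+1}.
\]
Pascal's rule gives \(\binom{d+1}{r}=\binom{d}{r}+\binom{d}{r-1}\), valid for \(r\ge 1\) and for \(r=0\) with the convention \(\binom{d}{-1}=0\), which matches \(\phi_{z,-1}=0\). Hence
\[
(E\phi_{z,r})(d)=z\binom{d}{r}z^{d}+z\binom{d}{r-1}z^{d}.
\]
Subtracting \(z\phi_{z,r}(d)\) leaves exactly \(z\phi_{z,r-1}(d)\), so \((E-z)\phi_{z,r}=z\phi_{z,r-1}\).

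Second, I would prove \(\phi_{z,r}\in\ker(E-z)^{r+1}\) by induction on \(r\). Since \(E-z\) is linear, iterating the identity gives
\[
(E-z)^{k}\phi_{z,r}=z^{k}\phi_{z,r-k}.
\]
Taking \(k=r\) gives \(z^r\phi_{z,0}\), and one more application gives \(z^{r+1}\phi_{z,-1}=0\). The base case \(r=0\) is \((E-z)z^d=z^{d+1}-z\cdot z^d=0\).

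Third, for a root \(z\) of multiplicity \(m\) of \(P\), write \(P(E)=Q(E)(E-z)^m\), using that polynomials in \(E\) commute. Each \(\phi_{z,r}\) with \(r\le m-1\) lies in \(\ker(E-z)^{r+1}\subseteq\ker(E-z)^m\subseteq\ker P(E)\). These are the listed modes \(z^d,\binom d1 z^d,\ldots,\binom d{m-1}z^d\). I would add two remarks:
\begin{itemize}
\item For \(z\neq 0\), the modes are linearly independent. Their span equals \(\operatorname{span}\{d^r z^d\}_{r<m}\), because \(\binom{d}{r}\) is a degree-\(r\) polynomial in \(d\) with nonzero leading coefficient \(1/r!\), giving a triangular change of basis. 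This connects the binomial basis to the Jordan-RoPE jets \(d^r e^{i\omega d}\).
\item Restricting \(E-z\) to this span gives a single nilpotent Jordan chain. When \(z\neq 0\), rescaling \(\psi_r=z^{-r}\phi_{z,r}\) makes \(E\) act as \(z+N\) with \(N\psi_r=\psi_{r-1}\).
\end{itemize}

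There is no real obstacle. The only care needed is at the edge cases: the convention \(\phi_{z,-1}=0\) at \(r=0\), and the case \(z=0\). For \(z=0\) the identity still holds formally, but the modes collapse to \(0^d\)-supported sequences and the "nonzero root" phrasing of the main text is needed to assert independence.
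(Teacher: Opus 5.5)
Your proposal is correct and follows the same route as the paper: Pascal's rule $\binom{d+1}{r}-\binom{d}{r}=\binom{d}{r-1}$ gives $(E-z)\phi_{z,r}=z\phi_{z,r-1}$, and applying this $r+1$ times annihilates $\phi_{z,r}$. Your additions make explicit what the paper leaves implicit in the ``multiplicity $m$'' clause: the factorization $P(E)=Q(E)(E-z)^m$, linear independence for $z\neq 0$, the rescaled Jordan chain $\psi_r=z^{-r}\phi_{z,r}$, and the degenerate case $z=0$.
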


\begin{proof}
Using \(\binom{d+1}{r}-\binom{d}{r}=\binom{d}{r-1}\), we have
\[
  (E-z)\phi_{z,r}(d)
  =
  \binom{d+1}{r}z^{d+1}
  -
  z\binom{d}{r}z^d
  =
  z\binom{d}{r-1}z^d
  =
  z\phi_{z,r-1}(d).
\]
Iterating the identity gives \((E-z)^{r+1}\phi_{z,r}=0\).
\end{proof}

The main text often writes jet factors as \((d/L)^r z^d\).  This is only a
different normalization.  For any fixed maximum order, the monomial basis
\(\{d^r\}\) and the binomial basis \(\{\binom{d}{r}\}\) are related by an
invertible triangular change of basis, so they span the same finite
polynomial-jet space.

\subsection{RoPE, Jordan-RoPE, and ALiBi as roots}

The root interpretation gives a compact dictionary for the positional
primitives in this paper.

\begin{table}[H]
\centering
\caption{Difference-module interpretation of common relative-position primitives.}
\label{tab:difference-root-dictionary}
\small
\begin{tabularx}{\linewidth}{>{\raggedright\arraybackslash}X>{\raggedright\arraybackslash}X>{\raggedright\arraybackslash}X}
\toprule
Mechanism & Difference root type & Primitive modes \\
\midrule
RoPE & simple nonzero unit root \(z=e^{i\omega}\) & \(e^{i\omega d}\) \\
Damped RoPE & simple interior root \(z=e^{-c/L+i\omega}\) & \(e^{-cd/L}e^{i\omega d}\) \\
Jordan-RoPE & repeated nonzero root \(z=e^{i\omega}\) & \(d^r e^{i\omega d}\) \\
ALiBi & repeated unit root \(z=1\) & \(1,d\) \\
\pjrope{} & repeated Fourier roots plus repeated unit root & \(d^r e^{i\omega d},1,d\) \\
\bottomrule
\end{tabularx}
\end{table}

In this notation, RoPE satisfies \((E-e^{i\omega})K=0\).  First-order
Jordan-RoPE satisfies \((E-e^{i\omega})^2K=0\).  ALiBi's affine direction is the
unit-root jet, since \((E-1)^2 d=0\).  A first-order complex PJ kernel can be
viewed as a solution of
\[
  (E-z)^2(E-1)^2K=0,
  \qquad z=e^{i\omega}.
\]
For real-valued kernels, the conjugate root must also be included:
\[
  (E-z)^2(E-\overline z)^2(E-1)^2K=0.
\]
The corresponding real basis contains
\[
  \cos\omega d,\quad \sin\omega d,\quad
  d\cos\omega d,\quad d\sin\omega d,\quad
  1,\quad d.
\]

\subsection{General PJ position space as a difference module}

Let
\[
  P(t)=\prod_{a=1}^{M}(t-z_a)^{m_a}.
\]
The finite-dimensional solution space of the constant-coefficient difference
equation
\[
  P(E)K=0
\]
is the exponential-polynomial space
\[
  K(d)
  =
  \sum_{a=1}^{M}
  \sum_{r=0}^{m_a-1}
  c_{a,r}\binom{d}{r}z_a^d.
\]
Thus the Fourier--jet part of \pjrope{} can be read as a finite
exponential-polynomial solution space generated by simple and repeated roots of
the shift operator.  The affine recency branch is the repeated root at
\(z=1\).

\subsection{Equivalence with finite-dimensional representations}
\label{app:diff_finite_representation}

The same structure appears as matrix coefficients of finite-dimensional
representations.  Suppose
\[
  K(d)=u^\top T^d v
\]
and
\[
  T\simeq \bigoplus_a z_a(I+N_a),
  \qquad N_a^{m_a}=0.
\]
Then
\[
  T^d
  \simeq
  \bigoplus_a
  z_a^d(I+N_a)^d
  =
  \bigoplus_a
  z_a^d
  \sum_{r=0}^{m_a-1}\binom{d}{r}N_a^r.
\]
Every matrix coefficient is therefore a finite linear combination of
\(\binom{d}{r}z_a^d\).

\begin{table}[H]
\centering
\caption{Three equivalent languages for the same finite-jet structure.}
\label{tab:difference-representation-dictionary}
\small
\begin{tabularx}{\linewidth}{>{\raggedright\arraybackslash}X>{\raggedright\arraybackslash}X>{\raggedright\arraybackslash}X}
\toprule
Difference equation & Representation theory & Spectral geometry \\
\midrule
simple root \(z\) & ordinary eigenvalue & reduced point \\
repeated root \(z\) & Jordan block & fat point \\
repeated root \(z=1\) & unipotent block & zero-frequency affine jet \\
\bottomrule
\end{tabularx}
\end{table}

\subsection{Stability and correlation decay from root location}
\label{app:diff_root_stability}

Write
\[
  z=\rho e^{i\omega}.
\]
The modulus \(\rho=|z|\) controls the long-distance behavior of the
corresponding lag mode:
\[
  z^d=\rho^d e^{i\omega d}.
\]
If \(0<\rho<1\), the mode decays exponentially,
\[
  \rho^d=e^{-d/\xi},
  \qquad
  \xi=-\frac{1}{\log\rho}.
\]
Thus roots inside the unit disk encode finite correlation length.  Roots
closer to the unit circle have longer memory.

The basic regimes are
\[
\begin{array}{lll}
|z|<1        &:& \text{exponentially decaying oscillation},\\
|z|=1,\ \text{simple root}
             &:& \text{bounded oscillation},\\
|z|=1,\ \text{repeated root}
             &:& \text{polynomially modulated oscillation},\\
|z|>1        &:& \text{exponential growth}.
\end{array}
\]
Repeated roots on or near the unit circle are exactly where polynomial growth
enters.  This explains why high-order Fourier jets are expressive but require
stabilization at long context.

LC-PJ is not itself an exact constant-coefficient difference-module solution,
because
\[
  \phi_L(d)=L\,\operatorname{asinh}(d/L)
\]
is a nonlinear coordinate substitution.  It should instead be understood as a
compactified deformation of the repeated-root jet coordinates, designed to
preserve local jet behavior while bounding far-field growth.

\paragraph{Fourier-distribution interpretation.}
There is an equivalent Fourier-distribution way to say the same thing.  The
ordinary Fourier character \(e^{i\omega d}\) corresponds to a point mass
\(\delta_\omega\) in frequency.  Since
\[
  d e^{i\omega d}
  =
  \frac{1}{i}\partial_\omega e^{i\omega d},
\]
the first Fourier jet corresponds to a derivative-of-delta direction at the
same frequency.  ALiBi's linear term is the analogous derivative direction at
zero frequency.  This is the distributional version of the repeated-root
statement: Jordan-RoPE replaces a spectral point mass by its finite jet, and
ALiBi is the zero-frequency affine jet.

\section{NTK-aware RoPE as a Spectral Flow of Difference Modules}
\label{app:ntk_diff_module}

This appendix relates NTK-aware RoPE scaling to the difference-module and
Fourier-jet viewpoint used in Appendix~\ref{app:difference-modules}.  We use
``NTK-aware'' in the sense of the community scaled-RoPE proposal and subsequent
base-frequency / long-context RoPE analyses
\cite{Bloc97NTK2023,Xiong2023LongContextScaling,Peng2024,Liu2024ScalingRoPE}.
The claim here is structural: NTK-aware RoPE moves the RoPE frequency grid, and
the infinitesimal tangent directions of this spectral motion are Fourier jets
in the lag-shift difference module.

\subsection{NTK-aware RoPE as spectral-root flow}

Using the notation of Appendix~\ref{app:difference-modules}, a RoPE character
\(\chi_z(d)=z^d\), \(z=e^{i\omega}\), is a simple root mode satisfying
\((E-z)\chi_z=0\).  NTK-aware RoPE scaling changes the effective frequency
grid used by RoPE.  Abstractly, this can be written as a one-parameter
deformation
\[
        z_\ell=z_\ell(\tau),
        \qquad
        \chi_{\ell,\tau}(d)=z_\ell(\tau)^d,
\]
where \(\tau\) is a scale parameter.  For each fixed \(\tau\),
\[
        (E-z_\ell(\tau))\chi_{\ell,\tau}=0.
\]
Thus NTK-aware RoPE may be viewed as a flow of simple roots in the spectral
base of the shift module.

Differentiating the root equation at \(\tau=0\) gives
\[
        (E-z_\ell)\dot{\chi}_{\ell}
        =
        \dot{z}_\ell \chi_{\ell},
\]
where \(z_\ell=z_\ell(0)\) and
\(\dot{\chi}_{\ell}=\partial_\tau\chi_{\ell,\tau}|_{\tau=0}\).
Applying \(E-z_\ell\) once more gives
\[
        (E-z_\ell)^2\dot{\chi}_{\ell}=0.
\]
Moreover,
\[
        \dot{\chi}_{\ell}(d)
        =
        \frac{\dot z_\ell}{z_\ell}d z_\ell^d.
\]
Therefore the tangent direction of an NTK-aware spectral flow is a first
generalized eigenvector of the lag shift.  In other words, it is a first
Fourier jet.

\subsection{Higher derivatives and repeated-root modules}

The same argument extends to higher derivatives.  Since
\[
        \partial_\tau^r z_\ell(\tau)^d\big|_{\tau=0}
\]
is a polynomial in \(d\) of degree at most \(r\) multiplied by \(z_\ell^d\), it
lies in the repeated-root module
\[
        \ker(E-z_\ell)^{r+1}.
\]
Equivalently, the first \(R\) derivatives of a spectral flow span a finite
Fourier-jet module
\[
        \mathcal{J}^{R}_{z_\ell}
        =
        \operatorname{span}
        \left\{
        z_\ell^d,
        dz_\ell^d,
        d^2z_\ell^d,
        \ldots,
        d^Rz_\ell^d
        \right\}.
\]
This is the repeated-root module
\[
        \mathbb{C}[E]/(E-z_\ell)^{R+1}
\]
up to the usual triangular change of basis between monomials \(d^r\) and
binomial coefficients \(\binom{d}{r}\).

Thus high-order PJ coordinates are a finite jet closure of the tangent
directions generated by NTK-aware RoPE scaling.

\subsection{Relation to the positional NTK}

Let \(K_\theta(d)\) be a scalar positional kernel.  The positional tangent
feature map is
\[
        \Psi_\theta(d)=\nabla_\theta K_\theta(d),
\]
and the corresponding positional tangent kernel is
\[
        \Theta^{\mathrm{pos}}_\theta(d,d')
        =
        \left\langle
        \nabla_\theta K_\theta(d),
        \nabla_\theta K_\theta(d')
        \right\rangle .
\]
When \(\theta\) contains a RoPE scale or frequency-flow parameter \(\tau\), the
feature
\[
        \partial_\tau e^{i\omega_\ell(\tau)d}\big|_{\tau=0}
\]
is proportional to
\[
        d e^{i\omega_\ell d}.
\]
Therefore the first-order positional NTK associated with an NTK-aware RoPE flow
contains first Fourier-jet directions.  If the parameterization explicitly
contains amplitudes for higher PJ orders,
\[
        K(d)
        =
        \sum_{\ell,r}
        a_{\ell,r}
        \left(\frac{d}{L}\right)^r
        e^{i\omega_\ell d},
\]
then the first-order tangent features with respect to \(a_{\ell,r}\) are
exactly the Fourier-jet basis functions
\[
        \partial_{a_{\ell,r}}K(d)
        =
        \left(\frac{d}{L}\right)^r
        e^{i\omega_\ell d}.
\]
Thus there are two related roles for jets.  NTK-aware RoPE produces first-order
jet directions by moving the frequency grid.  PJ-RoPE makes a finite family of
these jet directions explicit and learnable.

\subsection{Affine recency as the zero-frequency jet}

As Appendix~\ref{app:difference-modules} notes, the affine branch is the
repeated unit-root sector \(z=1\).  Equivalently,
\[
        d=\partial_\lambda e^{\lambda d}\big|_{\lambda=0}.
\]
Thus NTK-aware RoPE moves nonzero Fourier roots, while ALiBi-like recency
supplies the corresponding zero-frequency tangent direction.

\subsection{LC coordinates as stabilized jet coordinates}

The stability discussion in Appendix~\ref{app:difference-modules} explains why
raw high-order jets require a long-context chart.  LC-PJ replaces the raw
coordinate by
\[
        \phi_L(d)=L\,\operatorname{asinh}(d/L),
        \qquad
        \beta_L(d)=\frac{d}{\sqrt{d^2+L^2}},
\]
and uses compactified jet features of the schematic form
\[
        \beta_L(d)^r e^{i\omega \phi_L(d)}.
\]
This is a stabilized chart on the Fourier-jet module, not another exact
constant-coefficient difference-module solution.

\subsection{Summary}

The three views are compatible:
\[
\begin{array}{lll}
\text{Difference module} &:&
        \text{RoPE is a simple root; high-order jets are repeated roots.}
\\[2mm]
\text{NTK-aware RoPE} &:&
        \text{RoPE roots move along a spectral scaling flow.}
\\[2mm]
\text{PJ-RoPE} &:&
        \text{the tangent and higher tangent directions of this flow are made explicit.}
\end{array}
\]

In this language, NTK-aware RoPE is a low-dimensional spectral-flow baseline,
whereas PJ-RoPE is a finite Fourier-jet enlargement of the same local geometry.

\bibliographystyle{unsrturl}
\bibliography{references}

@inproceedings{Vaswani2017,
  author = {Vaswani, Ashish and Shazeer, Noam and Parmar, Niki and Uszkoreit, Jakob and Jones, Llion and Gomez, Aidan N. and Kaiser, Lukasz and Polosukhin, Illia},
  title = {Attention Is All You Need},
  booktitle = {Advances in Neural Information Processing Systems},
  volume = {30},
  pages = {5998--6008},
  year = {2017},
  url = {https://papers.nips.cc/paper_files/paper/2017/hash/3f5ee243547dee91fbd053c1c4a845aa-Abstract.html}
}

@inproceedings{Shaw2018,
  author = {Shaw, Peter and Uszkoreit, Jakob and Vaswani, Ashish},
  title = {Self-Attention with Relative Position Representations},
  booktitle = {Proceedings of the 2018 Conference of the North American Chapter of the Association for Computational Linguistics: Human Language Technologies, Volume 2 (Short Papers)},
  pages = {464--468},
  year = {2018},
  month = jun,
  address = {New Orleans, Louisiana},
  publisher = {Association for Computational Linguistics},
  doi = {10.18653/v1/N18-2074},
  url = {https://aclanthology.org/N18-2074/}
}

@inproceedings{Dai2019,
  author = {Dai, Zihang and Yang, Zhilin and Yang, Yiming and Carbonell, Jaime and Le, Quoc and Salakhutdinov, Ruslan},
  title = {Transformer-{XL}: Attentive Language Models beyond a Fixed-Length Context},
  booktitle = {Proceedings of the 57th Annual Meeting of the Association for Computational Linguistics},
  pages = {2978--2988},
  year = {2019},
  month = jul,
  address = {Florence, Italy},
  publisher = {Association for Computational Linguistics},
  doi = {10.18653/v1/P19-1285},
  url = {https://aclanthology.org/P19-1285/}
}

@misc{Su2021,
  author = {Su, Jianlin and Lu, Yu and Pan, Shengfeng and Murtadha, Ahmed and Wen, Bo and Liu, Yunfeng},
  title = {{RoFormer}: Enhanced Transformer with Rotary Position Embedding},
  year = {2021},
  eprint = {2104.09864},
  archivePrefix = {arXiv},
  primaryClass = {cs.CL},
  note = {arXiv preprint arXiv:2104.09864},
  url = {https://arxiv.org/abs/2104.09864}
}

@inproceedings{Press2022,
  author = {Press, Ofir and Smith, Noah A. and Lewis, Mike},
  title = {Train Short, Test Long: Attention with Linear Biases Enables Input Length Extrapolation},
  booktitle = {International Conference on Learning Representations},
  year = {2022},
  url = {https://openreview.net/forum?id=R8sQPpGCv0}
}

@misc{Chen2023,
  author = {Chen, Shouyuan and Wong, Sherman and Chen, Liangjian and Tian, Yuandong},
  title = {Extending Context Window of Large Language Models via Positional Interpolation},
  year = {2023},
  eprint = {2306.15595},
  archivePrefix = {arXiv},
  primaryClass = {cs.CL},
  note = {arXiv preprint arXiv:2306.15595},
  url = {https://arxiv.org/abs/2306.15595}
}

@inproceedings{Peng2024,
  author = {Peng, Bowen and Quesnelle, Jeffrey and Fan, Honglu and Shippole, Enrico},
  title = {{YaRN}: Efficient Context Window Extension of Large Language Models},
  booktitle = {International Conference on Learning Representations},
  year = {2024},
  url = {https://openreview.net/forum?id=wHBfxhZu1u},
  note = {arXiv:2309.00071}
}

@inproceedings{Ding2024,
  author = {Ding, Yiran and Zhang, Li Lyna and Zhang, Chengruidong and Xu, Yuanyuan and Shang, Ning and Xu, Jiahang and Yang, Fan and Yang, Mao},
  title = {{LongRoPE}: Extending {LLM} Context Window Beyond 2 Million Tokens},
  booktitle = {Proceedings of the 41st International Conference on Machine Learning},
  pages = {11091--11104},
  year = {2024},
  volume = {235},
  series = {Proceedings of Machine Learning Research},
  publisher = {PMLR},
  url = {https://proceedings.mlr.press/v235/ding24i.html}
}

@misc{Bloc97NTK2023,
  author = {{bloc97}},
  title = {{NTK-Aware Scaled RoPE} allows {LLaMA} models to have extended (8k+) context size without any fine-tuning and minimal perplexity degradation},
  year = {2023},
  howpublished = {Reddit post, r/LocalLLaMA},
  url = {https://www.reddit.com/r/LocalLLaMA/comments/14lz7j5/ntkaware_scaled_rope_allows_llama_models_to_have/},
  note = {Community proposal for NTK-aware scaled RoPE}
}

@misc{Xiong2023LongContextScaling,
  author = {Xiong, Wenhan and Liu, Jingyu and Molybog, Igor and Zhang, Hejia and Bhargava, Prajjwal and Hou, Rui and Martin, Louis and Rungta, Rashi and Sankararaman, Karthik Abinav and Oguz, Barlas and Khabsa, Madian and Fang, Han and Mehdad, Yashar and Narang, Sharan and Malik, Kshitiz and Fan, Angela and Bhosale, Shruti and Edunov, Sergey and Lewis, Mike and Wang, Sinong and Ma, Hao},
  title = {Effective Long-Context Scaling of Foundation Models},
  year = {2023},
  eprint = {2309.16039},
  archivePrefix = {arXiv},
  primaryClass = {cs.CL},
  doi = {10.48550/arXiv.2309.16039},
  url = {https://arxiv.org/abs/2309.16039},
  note = {arXiv preprint arXiv:2309.16039}
}

@inproceedings{Liu2024ScalingRoPE,
  author = {Liu, Xiaoran and Yan, Hang and Zhang, Shuo and An, Chenxin and Qiu, Xipeng and Lin, Dahua},
  title = {Scaling Laws of {RoPE}-based Extrapolation},
  booktitle = {International Conference on Learning Representations},
  year = {2024},
  eprint = {2310.05209},
  archivePrefix = {arXiv},
  primaryClass = {cs.CL},
  doi = {10.48550/arXiv.2310.05209},
  url = {https://arxiv.org/abs/2310.05209}
}

@inproceedings{Sun2023,
  author = {Sun, Yutao and Dong, Li and Patra, Barun and Ma, Shuming and Huang, Shaohan and Benhaim, Alon and Chaudhary, Vishrav and Song, Xia and Wei, Furu},
  title = {A Length-Extrapolatable Transformer},
  booktitle = {Proceedings of the 61st Annual Meeting of the Association for Computational Linguistics (Volume 1: Long Papers)},
  pages = {14590--14604},
  year = {2023},
  month = jul,
  address = {Toronto, Canada},
  publisher = {Association for Computational Linguistics},
  doi = {10.18653/v1/2023.acl-long.816},
  url = {https://aclanthology.org/2023.acl-long.816/}
}

@inproceedings{Chi2022,
  author = {Chi, Ta-Chung and Fan, Ting-Han and Ramadge, Peter J. and Rudnicky, Alexander I.},
  title = {{KERPLE}: Kernelized Relative Positional Embedding for Length Extrapolation},
  booktitle = {Advances in Neural Information Processing Systems},
  volume = {35},
  pages = {8386--8399},
  year = {2022},
  url = {https://proceedings.neurips.cc/paper_files/paper/2022/hash/37a413841a614b5414b333585e7613b8-Abstract-Conference.html}
}

@inproceedings{Li2024,
  author = {Li, Shanda and You, Chong and Guruganesh, Guru and Ainslie, Joshua and Ontanon, Santiago and Zaheer, Manzil and Sanghai, Sumit and Yang, Yiming and Kumar, Sanjiv and Bhojanapalli, Srinadh},
  title = {Functional Interpolation for Relative Positions Improves Long Context Transformers},
  booktitle = {International Conference on Learning Representations},
  year = {2024},
  url = {https://proceedings.iclr.cc/paper_files/paper/2024/hash/2f55a8b7b1c2c6312eb86557bb9a2bd5-Abstract-Conference.html},
  note = {arXiv:2310.04418}
}

@inproceedings{Haviv2022,
  author = {Haviv, Adi and Ram, Ori and Press, Ofir and Izsak, Peter and Levy, Omer},
  title = {Transformer Language Models without Positional Encodings Still Learn Positional Information},
  booktitle = {Findings of the Association for Computational Linguistics: EMNLP 2022},
  pages = {1382--1390},
  year = {2022},
  month = dec,
  address = {Abu Dhabi, United Arab Emirates},
  publisher = {Association for Computational Linguistics},
  doi = {10.18653/v1/2022.findings-emnlp.99},
  url = {https://aclanthology.org/2022.findings-emnlp.99/}
}

@inproceedings{Kazemnejad2023,
  author = {Kazemnejad, Amirhossein and Padhi, Inkit and Ramamurthy, Karthikeyan Natesan and Das, Payel and Reddy, Siva},
  title = {The Impact of Positional Encoding on Length Generalization in Transformers},
  booktitle = {Advances in Neural Information Processing Systems},
  volume = {36},
  year = {2023},
  url = {https://proceedings.neurips.cc/paper_files/paper/2023/hash/4e85362c02172c0c6567ce593122d31c-Abstract-Conference.html},
  note = {arXiv:2305.19466}
}

@inproceedings{Kogkalidis2024,
  author = {Kogkalidis, Konstantinos and Bernardy, Jean-Philippe and Garg, Vikas},
  title = {Algebraic Positional Encodings},
  booktitle = {Advances in Neural Information Processing Systems},
  volume = {37},
  year = {2024},
  doi = {10.52202/079017-1099},
  url = {https://proceedings.neurips.cc/paper_files/paper/2024/hash/3d8f2fdc04fa66c9239f2eb14379546d-Abstract-Conference.html}
}

@inproceedings{Ostmeier2024,
  author = {Ostmeier, Sophie and Axelrod, Brian and Varma, Maya and Moseley, Michael E. and Chaudhari, Akshay and Langlotz, Curtis},
  title = {{LieRE}: Lie Rotational Positional Encodings},
  booktitle = {Proceedings of the 42nd International Conference on Machine Learning},
  year = {2025},
  eprint = {2406.10322},
  archivePrefix = {arXiv},
  primaryClass = {cs.CV},
  note = {ICML 2025; arXiv:2406.10322},
  url = {https://arxiv.org/abs/2406.10322}
}

@inproceedings{Zhang2026GRAPE,
  author = {Zhang, Yifan and Chen, Zixiang and Liu, Yifeng and Qin, Zhen and Yuan, Huizhuo and Xu, Kangping and Yuan, Yang and Gu, Quanquan and Yao, Andrew Chi-Chih},
  title = {Group Representational Position Encoding},
  booktitle = {International Conference on Learning Representations},
  year = {2026},
  eprint = {2512.07805},
  archivePrefix = {arXiv},
  primaryClass = {cs.LG},
  doi = {10.48550/arXiv.2512.07805},
  url = {https://arxiv.org/abs/2512.07805},
  note = {ICLR 2026; arXiv:2512.07805}
}

@article{Raffel2020,
  author = {Raffel, Colin and Shazeer, Noam and Roberts, Adam and Lee, Katherine and Narang, Sharan and Matena, Michael and Zhou, Yanqi and Li, Wei and Liu, Peter J.},
  title = {Exploring the Limits of Transfer Learning with a Unified Text-to-Text Transformer},
  journal = {Journal of Machine Learning Research},
  volume = {21},
  number = {140},
  pages = {1--67},
  year = {2020},
  url = {https://jmlr.org/papers/v21/20-074.html}
}

@misc{Gao2024MEP,
  author = {Gao, Weiguo},
  title = {{MEP}: Multiple Kernel Learning Enhancing Relative Positional Encoding Length Extrapolation},
  year = {2024},
  eprint = {2403.17698},
  archivePrefix = {arXiv},
  primaryClass = {cs.LG},
  doi = {10.48550/arXiv.2403.17698},
  url = {https://arxiv.org/abs/2403.17698},
  note = {arXiv preprint arXiv:2403.17698}
}

@misc{Angelotti2023HyPE,
  author = {Angelotti, Giorgio},
  title = {{HyPE}: Attention with Hyperbolic Biases for Relative Positional Encoding},
  year = {2023},
  eprint = {2310.19676},
  archivePrefix = {arXiv},
  primaryClass = {cs.LG},
  doi = {10.48550/arXiv.2310.19676},
  url = {https://arxiv.org/abs/2310.19676},
  note = {arXiv preprint arXiv:2310.19676}
}

@misc{Zubkov2022FastRPB,
  author = {Zubkov, Maksim and Gavrilov, Daniil},
  title = {{FastRPB}: a Scalable Relative Positional Encoding for Long Sequence Tasks},
  year = {2022},
  eprint = {2202.11364},
  archivePrefix = {arXiv},
  primaryClass = {cs.LG},
  doi = {10.48550/arXiv.2202.11364},
  url = {https://arxiv.org/abs/2202.11364},
  note = {arXiv preprint arXiv:2202.11364}
}

@inproceedings{Huang2019,
  author = {Huang, Cheng-Zhi Anna and Vaswani, Ashish and Uszkoreit, Jakob and Shazeer, Noam and Simon, Ian and Hawthorne, Curtis and Dai, Andrew M. and Hoffman, Matthew D. and Dinculescu, Monica and Eck, Douglas},
  title = {Music Transformer: Generating Music with Long-Term Structure},
  booktitle = {International Conference on Learning Representations},
  year = {2019},
  url = {https://openreview.net/forum?id=rJe4ShAcF7},
  note = {arXiv:1809.04281}
}

@inproceedings{Hawthorne2019,
  author = {Hawthorne, Curtis and Stasyuk, Andriy and Roberts, Adam and Simon, Ian and Huang, Cheng-Zhi Anna and Dieleman, Sander and Elsen, Erich and Engel, Jesse and Eck, Douglas},
  title = {Enabling Factorized Piano Music Modeling and Generation with the {MAESTRO} Dataset},
  booktitle = {International Conference on Learning Representations},
  year = {2019},
  url = {https://openreview.net/forum?id=r1lYRjC9F7},
  note = {arXiv:1810.12247}
}

@inproceedings{Thickstun2017,
  author = {Thickstun, John and Harchaoui, Za{\"i}d and Kakade, Sham},
  title = {Learning Features of Music From Scratch},
  booktitle = {International Conference on Learning Representations},
  year = {2017},
  url = {https://openreview.net/forum?id=rkFBJv9gg},
  note = {arXiv:1611.09827}
}

@book{Hall2015,
  author = {Hall, Brian C.},
  title = {Lie Groups, Lie Algebras, and Representations: An Elementary Introduction},
  series = {Graduate Texts in Mathematics},
  volume = {222},
  edition = {2},
  publisher = {Springer},
  address = {Cham},
  year = {2015},
  doi = {10.1007/978-3-319-13467-3},
  url = {https://link.springer.com/book/10.1007/978-3-319-13467-3}
}

@article{Wigner1939,
  author = {Wigner, Eugene P.},
  title = {On Unitary Representations of the Inhomogeneous Lorentz Group},
  journal = {Annals of Mathematics},
  volume = {40},
  number = {1},
  pages = {149--204},
  year = {1939},
  doi = {10.2307/1968551},
  url = {https://inspirehep.net/literature/26312}
}

@misc{JordanRoPE2026,
  author = {Zhang, Yaobo},
  title = {{Jordan-RoPE}: Non-Semisimple Relative Positional Encoding via Complex Jordan Blocks},
  year = {2026},
  eprint = {2605.04217},
  archivePrefix = {arXiv},
  primaryClass = {cs.LG},
  doi = {10.48550/arXiv.2605.04217},
  url = {https://arxiv.org/abs/2605.04217},
  note = {arXiv preprint arXiv:2605.04217}
}

\end{document}